%% file: v5.tex
\documentclass{article}

 \usepackage[preprint, nonatbib]{neurips_2026}

\usepackage[utf8]{inputenc} %
\usepackage[T1]{fontenc}    %
\usepackage{hyperref}       %
\usepackage{url}            %
\usepackage{booktabs}       %
\usepackage{amsfonts}       %
\usepackage{nicefrac}       %
\usepackage{microtype}      %
\usepackage{xcolor}         %
\usepackage{comment}

\usepackage{amsmath,amssymb,amsthm}   %
\usepackage{enumitem}                 %
\usepackage{algorithm}                %
\usepackage{algpseudocode}            %

\newcommand{\R}{\mathbb{R}}
\newcommand{\E}{\mathbb{E}}

\DeclareMathOperator{\softmax}{softmax}
\DeclareMathOperator{\diag}{diag}
\DeclareMathOperator{\KL}{KL}
\DeclareMathOperator{\tr}{tr}

\newtheorem{theorem}{Theorem}
\newtheorem{proposition}{Proposition}
\newtheorem{remark}{Remark}
\newtheorem{lemma}{Lemma}

\usepackage{multirow}
\usepackage{graphicx}

\title{
Minimally Invasive Steering of Language Models
}

\author{%
  Taha Entesari, Jingyu Zhang, Daniel Khashabi, Mahyar Fazlyab\\
  Johns Hopkins University
}

\begin{document}

\maketitle

\begin{abstract}
Pre-logit \textit{steering} adapts a frozen language model to a test-time reward by adding vectors to its final hidden states. Unregularized reward optimization can substantially alter the output distribution and degrade generation quality. We propose Minimally Invasive Steering Vector Optimization (MISVO), which penalizes interventions using the local KL geometry of the induced token distribution. The resulting Fisher quadratic measures distributional sensitivity and admits an analytic gradient computed through matrix--vector products with the frozen language-model head. We derive an exact decomposition of the sequence-level KL gradient into an analytic Fisher term and a suffix score-function term. For a fixed generation horizon, we show that the suffix term is second order in the steering magnitude and that three Fisher surrogates agree with the full KL gradient to first order. MISVO uses the frozen-reference surrogate to optimize position-specific interventions without updating model parameters. Across preference and code-generation tasks on models with approximately 1B--14B parameters, MISVO achieves the highest mean reward in six of seven model--task settings, with diversity and coherence scores close to those of Best-of-$N$.

\end{abstract}

\section{Introduction}
\label{sec:intro}

Post-training methods, such as reinforcement learning from human feedback~\cite{ouyang2022training}, align large language models (LLMs) by updating their parameters. At deployment, however, the target reward may depend on the user or context, change over time, or be available only through a black-box evaluator. Test-time alignment addresses these settings by adapting generation to the available reward while keeping model parameters fixed~\cite{khanov2024args, mudgal2024controlled, kong2024aligning, kanai2025test, zhang2025controllablesafetyalignment}.

One approach to test-time alignment is \textit{activation steering}, which modifies generation through additive interventions, or steering vectors, in the model's hidden representations~\cite{subramani2022extracting, turner2023steering}. These interventions can be fixed in advance or adapted during inference to the current prompt and target objective~\cite{kong2024aligning, kanai2025test}. In this paper, we focus on \textit{pre-logit steering}, which applies interventions immediately before the language-model head~\cite{kanai2025test}. This parameterization permits reward optimization without backpropagation through the transformer body. 
However, without sufficient regularization, optimization can push residual states far from the reachable manifold~\cite{mishra2026steeredllmactivationsnonsurjective}, leading to \textit{oversteering} that degrades coherence or diversity.
Optimizing an imperfect reward can also amplify these effects by favoring outputs that exploit the evaluator~\cite{gao2022scaling, krakovna2020specification}.

To mitigate oversteering, we propose \textit{minimally invasive steering}, which optimizes steering vectors at inference to balance reward improvement against deviation from the reference policy. Kullback--Leibler (KL) divergence naturally measures this deviation by comparing the induced output distributions~\cite{ouyang2022training, korbak2022rl, go2023aligning}. However, differentiating the sequence-level KL requires accounting for how steering changes the prefix distribution, introducing a score-function term whose Monte Carlo estimation can have high variance. This motivates a quadratic surrogate with an analytic gradient. An isotropic penalty on the squared Euclidean norm is computationally simple but ignores distributional sensitivity: equal-norm steering vectors can induce substantially different changes in token probabilities. We instead derive the quadratic penalty from the local KL geometry, yielding a Fisher-information cost matrix that captures the distributional sensitivity of each steering direction. Its gradient is computed through matrix--vector products with the frozen language-model head. We relate this gradient to the full sequence-level KL gradient, including the omitted score-function term, and establish first-order agreement near zero steering for a fixed generation horizon. The resulting algorithm, \textit{Minimally Invasive Steering Vector Optimization (MISVO)}, combines reward-gradient estimates with an analytic regularizer gradient and reuses a reference-policy Fisher estimate throughout test-time optimization. In summary, we make the following contributions:
\begin{itemize}[topsep=0pt, itemsep=0pt, leftmargin=*]
\item \textbf{Distribution-sensitive steering.} We formulate pre-logit steering as reward maximization with a quadratic effort penalty and derive the cost matrix from token-level KL geometry (Proposition~\ref{prop:kl-quad}). The associated quadratic form equals the variance of the induced logit perturbation.
\item \textbf{Sequence-level gradient analysis.} We decompose the exact KL gradient into an analytic Fisher term and a suffix score-function term (Theorem~\ref{thm:seqkl-grad}). We bound the differences between three Fisher surrogates (Theorem~\ref{thm:surrogate-equiv}) and establish that the suffix term is second order for a fixed horizon (Proposition~\ref{prop:suffix-second-order}). Thus, the frozen-Fisher gradient approximates the full KL gradient with error $O(\|u\|^2)$ near the origin.
\item \textbf{Algorithm and evaluation.} MISVO combines a score-function reward estimator with an analytic regularizer gradient (Algorithm~\ref{alg:misvo}). Experiments on SHP and MBPP+ evaluate reward, generation quality, distributional deviation, runtime, and memory across four frozen models with approximately 1B--14B parameters.
\end{itemize}

\section{Problem Setup}
\label{sec:problem}

\begin{figure}[t!]
    \centering
    \includegraphics[width=0.95\linewidth]{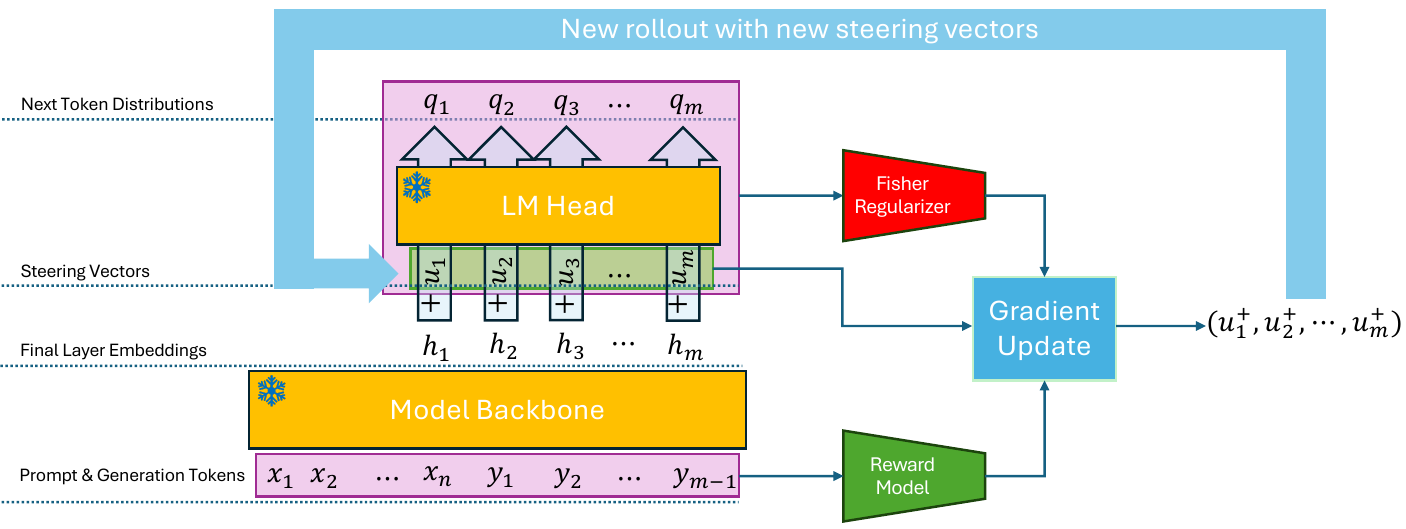}
    \caption{\textbf{Minimally invasive test-time steering.} MISVO adds a position-specific vector $u_t$ to the hidden state $h_t$ entering the frozen LM head $(W,b)$. The vectors are optimized for reward with a Fisher-quadratic penalty that locally approximates KL divergence from the reference policy.}
    \label{fig:misvo}
\end{figure}

Fix a prompt $x$ and a generation horizon $T$. Let $y=(y_1,\dots,y_T)$ denote a response over a vocabulary of size $V$, and let $h_t=h_t(x,y_{<t})\in\R^d$ be the frozen model's hidden state immediately before the language-model (LM) head. With head parameters $W\in\R^{V\times d}$ and $b\in\R^V$, the reference policy is
\begin{equation*}
\pi_{\text{ref}}(\cdot\mid x,y_{<t})
= \softmax(Wh_t+b),
\qquad
\pi_{\text{ref}}(y\mid x)
= \prod_{t=1}^T \pi_{\text{ref}}(y_t\mid x,y_{<t}).
\end{equation*}
An external evaluator assigns a terminal reward $R(x,y)\in\R$, which need not be differentiable.

\paragraph{Pre-logit steering.}
For each prompt, we optimize position-specific vectors $u=(u_1,\dots,u_T)\in\R^{d\times T}$, shared across responses sampled under the current policy. The steered policy is
\begin{equation*}
\pi_u(\cdot\mid x,y_{<t})
= \softmax\!\big(W(h_t+u_t)+b\big),
\qquad
\pi_u(y\mid x)
= \prod_{t=1}^T \pi_u(y_t\mid x,y_{<t}).
\end{equation*}
All model parameters remain fixed, and $\pi_0=\pi_{\text{ref}}$. For a fixed prefix, $h_t(x,y_{<t})$ is independent of $u$, and steering changes the logits by $Wu_t$. Gradients with respect to the steering vectors therefore require no backpropagation through the transformer body, although steering changes the distribution of sampled prefixes.

\paragraph{KL-regularized alignment.}
A standard test-time alignment objective is~\cite{korbak2022rl,kong2024aligning}
\begin{equation*}
\max_{q(\cdot\mid x)}
\;\E_{y\sim q}[R(x,y)]
-\lambda\,\KL\!\big(q(\cdot\mid x)\,\|\,\pi_{\text{ref}}(\cdot\mid x)\big),
\qquad \lambda>0,
\end{equation*}
where the maximization is over response distributions and $\lambda$ controls the reward--deviation trade-off. Its unique optimizer is the \emph{ideal} reward-tilted distribution
\begin{equation*}
q^\star(y\mid x)
=
\frac{\pi_{\text{ref}}(y\mid x)\exp(R(x,y)/\lambda)}
{\sum_{y'}\pi_{\text{ref}}(y'\mid x)\exp(R(x,y')/\lambda)}.
\end{equation*}
Our goal is to approximate sampling from $q^\star$ through pre-logit steering, while keeping all model parameters fixed.

\section{Proposed Method: Minimally Invasive Steering}
\label{sec:method}

The ideal distribution $q^\star$ need not be representable by position-specific pre-logit interventions. Restricting the KL-regularized objective to the steering policy class gives
\begin{equation}
\label{eq:steering-kl-objective}
\max_{u\in\R^{d\times T}}
\;\E_{y\sim\pi_u(\cdot\mid x)}[R(x,y)]
-\lambda\,\KL\!\big(\pi_u(\cdot\mid x)\,\|\,\pi_{\text{ref}}(\cdot\mid x)\big).
\end{equation}
Equivalently, this problem minimizes
$\KL(\pi_u(\cdot\mid x)\,\|\,q^\star(\cdot\mid x))$
over the steering policy class. By the autoregressive chain rule, the sequence-level KL decomposes into expected token-level divergences:
\begin{equation}
\label{eq:kl-effort}
\begin{aligned}
\mathcal{E}_{\mathrm{KL}}(u)
&:= \KL\!\big(\pi_u(\cdot\mid x)\,\|\,\pi_{\text{ref}}(\cdot\mid x)\big) \\
&= \sum_{t=1}^T
\E_{y_{<t}\sim\pi_u}\!\left[
\KL\!\big(
\pi_u(\cdot\mid x,y_{<t})
\,\|\,
\pi_{\text{ref}}(\cdot\mid x,y_{<t})
\big)
\right].
\end{aligned}
\end{equation}
A derivation is provided in Appendix~\ref{app:chain-rule}. Although each token-level divergence can be evaluated directly, the prefix distribution also depends on $u$. Differentiating this distribution contributes a score-function term whose Monte Carlo estimation can have high variance, motivating an alterntive regularizer.

\paragraph{Quadratic regularization.}
We replace the KL penalty with a quadratic surrogate:
\begin{equation}
\label{eq:objective}
\max_{u\in\R^{d\times T}}
\quad
J(u)
:=
\E_{y\sim\pi_u(\cdot\mid x)}[R(x,y)]
-\lambda\,\mathcal{E}_{\Sigma}(u),
\qquad
\mathcal{E}_{\Sigma}(u)
:=
\frac{1}{2}\sum_{t=1}^T u_t^\top\Sigma_tu_t,
\end{equation}
where $\Sigma_t\in\R^{d\times d}$ are fixed symmetric positive semidefinite cost matrices. This gives the analytic gradient
$\nabla_{u_t}\mathcal{E}_{\Sigma}(u)=\Sigma_tu_t$.
We first present the resulting optimization algorithm, then derive a Fisher-based choice of $\Sigma_t$ from the local KL geometry in Section~\ref{sec:fisherCostMatrices}.

\subsection{Algorithm}
\label{sec:algorithm}

We optimize~\eqref{eq:objective} by stochastic gradient ascent. At each iteration, we sample $K>1$ independent responses $y^{(i)}\sim\pi_u(\cdot\mid x)$ and evaluate their rewards $R_i=R(x,y^{(i)})$. The update direction at position $t$ is
\begin{equation}
\label{eq:update}
\begin{aligned}
g_t^R
&:= \frac{1}{K}\sum_{i=1}^K
(R_i-\bar R_i)\,
W^\top\big(e_{y_t^{(i)}}-p_t^{(i)}\big), \\
d_t
&:= g_t^R-\lambda\,\Sigma_tu_t,
\qquad
u_t^+ := u_t+\eta d_t,
\end{aligned}
\end{equation}
where $p_t^{(i)}=\pi_u(\cdot\mid x,y_{<t}^{(i)})$ and $e_{y_t^{(i)}}\in\R^V$ is the corresponding one-hot vector. The leave-one-out baseline
$\bar R_i=(K-1)^{-1}\sum_{j\ne i}R_j$
reduces variance without biasing the reward-gradient estimator~\cite{ahmadian2024back}. All positions are updated using rollouts and token probabilities from the same policy iterate. Algorithm~\ref{alg:misvo} summarizes the procedure. It returns the final steered policy and the highest-reward response sampled during optimization.

\begin{algorithm}[t!]
\caption{Minimally Invasive Steering Vector Optimization (MISVO)}
\label{alg:misvo}
\begin{algorithmic}[1]
\State \textbf{Input:} frozen model $\pi_{\text{ref}}$ with LM head $(W,b)$, prompt $x$, reward $R$, horizon $T$, cost matrices $\{\Sigma_t\}_{t=1}^T$, regularization weight $\lambda$, learning rate $\eta$, rollouts per step $K>1$, iterations $N$
\State Initialize $u_t\gets 0$ for $t=1,\dots,T$
\For{$n=1,\dots,N$}
    \State Sample $y^{(i,n)}\sim\pi_u(\cdot\mid x)$ independently for $i=1,\dots,K$
    \State Store $p_t^{(i)}\gets\pi_u(\cdot\mid x,y_{<t}^{(i,n)})$ for all $i,t$
    \State Compute $R_i\gets R(x,y^{(i,n)})$ and
    $\bar R_i\gets(K-1)^{-1}\sum_{j\ne i}R_j$ for all $i$
    \For{$t=1,\dots,T$}
        \State $g_t^R\gets \frac{1}{K}\sum_{i=1}^K
        (R_i-\bar R_i)W^\top(e_{y_t^{(i,n)}}-p_t^{(i)})$
        \State $u_t^+\gets u_t+\eta(g_t^R-\lambda\Sigma_tu_t)$
    \EndFor
    \State $u\gets u^+$
\EndFor
\State \textbf{Output:} policy $\pi_u$ and a response attaining
$\max_{i,n}R(x,y^{(i,n)})$
\end{algorithmic}
\end{algorithm}

\section{Fisher Cost Matrices}
\label{sec:fisherCostMatrices}

In this section, we derive the cost matrices $\Sigma_t$ defined in \eqref{eq:objective} from the local geometry of KL divergence and quantify how closely the resulting regularizer gradient approximates the full sequence-level KL gradient.

\subsection{From token-level KL to a quadratic penalty}
\label{sec:kl-fisher}
Setting $\Sigma_t=I$ gives the isotropic penalty $\mathcal{E}_{\Sigma}(u)=\tfrac12\sum_t\|u_t\|_2^2$, which measures hidden-state displacement without accounting for its effect on token probabilities. To capture this distributional sensitivity, we use the classical information-geometric result that the Fisher information determines the second-order expansion of KL divergence~\cite{amari1998natural,amari2000methods}. We then specialize this relation to additive pre-logit interventions to construct a quadratic steering penalty.

Specifically, fix a position $t$ and prefix $y_{<t}$. Write
$z=Wh_t+b$ for the reference logits,
$\delta=Wu_t$ for the steering-induced perturbation, and
$\pi_u^t=\pi_u(\cdot\mid x,y_{<t})$.
The log-partition function $A(z)=\log\sum_{i=1}^V e^{z_i}$ gives
\begin{equation}
\label{eq:per-token-kl-exact}
\KL(\pi_u^t\|\pi_{\text{ref}}^t)
=
A(z)-A(z+\delta)+\langle\pi_u^t,\delta\rangle.
\end{equation}
Moreover, its Hessian is given by
\begin{equation}
\label{eq:softmax-jacobian}
\nabla^2 A(z)
=
\diag(\pi_{\text{ref}}^t)
-\pi_{\text{ref}}^t(\pi_{\text{ref}}^t)^\top
=:C_{\pi_{\text{ref}}^t}.
\end{equation}
We now expand this expression around $u_t=0$ to identify the desired quadratic form.
\begin{proposition}[Token-level KL approximation with Fisher information matrix]
\label{prop:kl-quad}
For a fixed prefix,
\begin{equation}
\label{eq:kl-quadratic}
\KL(\pi_u^t\|\pi_{\text{ref}}^t)
=
\frac{1}{2}u_t^\top F_t(0)u_t
+O(\|u_t\|_2^3),
\end{equation}
as $u_t\to0$, where
\begin{equation}
\label{eq:fisher-u}
F_t(u_t)
:=
W^\top
\bigl(\diag(\pi_u^t)-\pi_u^t(\pi_u^t)^\top\bigr)W
\end{equation}
is the Fisher information matrix with respect to $u_t$.
\end{proposition}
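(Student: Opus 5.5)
The plan is to Taylor-expand the exact identity \eqref{eq:per-token-kl-exact} in $\delta = Wu_t$ around $\delta=0$, and then pull the result back to $u_t$ through the linear map $W$. First I will write $\pi_u^t = \nabla A(z+\delta) = \softmax(z+\delta)$. The per-token KL then becomes the Bregman divergence of the log-partition function:
\begin{equation*}
\KL(\pi_u^t\|\pi_{\text{ref}}^t) = A(z) - A(z+\delta) + \langle \nabla A(z+\delta), \delta\rangle =: D(\delta).
\end{equation*}
By inspection, $D(0)=0$.

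Next I will compute the derivatives of $D$ at $\delta=0$. Differentiating gives $\nabla D(\delta) = -\nabla A(z+\delta) + \nabla A(z+\delta) + \nabla^2 A(z+\delta)\delta = \nabla^2A(z+\delta)\,\delta$. Hence $\nabla D(0)=0$ and $\nabla^2 D(0) = \nabla^2 A(z) = C_{\pi_{\text{ref}}^t}$ by \eqref{eq:softmax-jacobian}.

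Since $A$ is real-analytic and in particular $C^\infty$, $D$ is $C^3$ on a neighborhood of the origin. Taylor's theorem with Lagrange remainder therefore gives
\begin{equation*}
D(\delta) = \tfrac12 \delta^\top C_{\pi_{\text{ref}}^t}\delta + O(\|\delta\|_2^3).
\end{equation*}
The hidden constant is bounded by the supremum of the third derivatives of $D$ on a unit ball. These derivatives are finite: the third derivatives of $A$ are cumulants of a categorical distribution and are uniformly bounded by a constant independent of $z$ (entries of order at most $2$). The only slightly delicate point is this remainder bound. Any compactness argument on the unit ball around $z$ suffices, since the prefix, and hence $z$, is fixed.

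Finally I substitute $\delta = Wu_t$. Then $\delta^\top C_{\pi_{\text{ref}}^t}\delta = u_t^\top W^\top C_{\pi_{\text{ref}}^t} W u_t = u_t^\top F_t(0) u_t$, because $\pi_0^t = \pi_{\text{ref}}^t$ makes \eqref{eq:fisher-u} at $u_t=0$ equal to $W^\top C_{\pi_{\text{ref}}^t}W$. For the remainder, $\|\delta\|_2 \le \|W\|_2\|u_t\|_2$, so the $O(\|\delta\|_2^3)$ term becomes $O(\|u_t\|_2^3)$ as $u_t\to0$, with the constant absorbing $\|W\|_2^3$.

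I will also remark why $F_t(u_t)$ deserves the name Fisher information with respect to $u_t$. The score is $\nabla_{u_t}\log\pi_u(y_t\mid\cdot) = W^\top(e_{y_t}-\pi_u^t)$, and its covariance under $y_t\sim\pi_u^t$ is exactly $W^\top(\diag(\pi_u^t)-\pi_u^t(\pi_u^t)^\top)W$.
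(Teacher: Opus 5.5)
Your proof is correct. It reaches the same leading term as the paper, but it controls the remainder by a different argument.

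\textbf{Your route.} You apply the multivariate Taylor theorem to the Bregman form $D(\delta)$ at $\delta=0$. You then bound the cubic term by the supremum of $\nabla^3 D$ over a unit ball, using compactness since $z$ is fixed.

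\textbf{The paper's route.} It differentiates $g(\tau)=\KL(\pi_\tau\|\pi_{\text{ref}}^t)$ along the logit path $z+\tau\delta$. This gives the exact identity $\KL(\pi_u^t\|\pi_{\text{ref}}^t)=\int_0^1\tau\,\delta^\top C_{\pi_\tau}\delta\,d\tau$, which is the integral form of the second-order remainder. It then splits $C_{\pi_\tau}=C_{\pi_{\text{ref}}^t}+(C_{\pi_\tau}-C_{\pi_{\text{ref}}^t})$ and bounds the second piece with two Lipschitz estimates: $\|C_p-C_q\|_2\le 3\|p-q\|_2$ and $\|\softmax(z')-\softmax(z)\|_2\le\|z'-z\|_2$.

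\textbf{What each buys.} The paper's argument needs no third or fourth derivatives of $A$. It yields the explicit, non-asymptotic bound $|R|\le\|Wu_t\|_2^3\le\sigma_{\max}(W)^3\|u_t\|_2^3$, valid for every $u_t$ and uniform over prefixes. The same integral identity also gives the global bound $\KL\le\tfrac12\|Wu_t\|_2^2$, and the Lipschitz lemmas are reused in Theorem~\ref{thm:surrogate-equiv}. Your argument is the standard textbook one and suffices for the statement as posed (fixed prefix, $u_t\to0$). Its constant, however, is implicit and, as written, depends on the prefix.

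\textbf{A small imprecision.} The third derivatives of $D$ are not the third derivatives of $A$. From $\nabla D(\delta)=\nabla^2A(z+\delta)\,\delta$ one gets $\partial_{ijl}D(\delta)=2\,\partial_{ijl}A(z+\delta)+\sum_k\partial_{ijkl}A(z+\delta)\,\delta_k$. Your cumulant remark would therefore need bounded fourth cumulants as well. Your compactness argument covers this anyway, so nothing breaks.
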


The proof is given in Appendix~\ref{app:kl-quad-proof}. 

Since the token distribution depends on the prefix, the resulting Fisher matrix is prefix-dependent. To obtain fixed cost matrices required by~\eqref{eq:objective}, we average the Fisher matrix over reference-policy prefixes:
\begin{equation}
\label{eq:fisher-effort}
\bar F_t
:=
\E_{y_{<t}\sim\pi_{\text{ref}}}[F_t(0)],
\qquad
\mathcal E_{\bar F}(u)
:=
\frac{1}{2}\sum_{t=1}^T u_t^\top\bar F_tu_t.
\end{equation}
Our proposed algorithm (MISVO) uses $\Sigma_t=\bar F_t$. This choice makes two approximations to the KL penalty in ~\eqref{eq:kl-effort}: replacing each token-level KL by its quadratic expansion and replacing the steered prefix distribution by the reference distribution. We quantify their effects below.

\begin{remark}[Variance interpretation]
\label{rem:kl-properties}
\normalfont
The Fisher quadratic satisfies
\begin{equation}
\label{eq:fisher-variance}
u_t^\top F_t(0)u_t
=
\operatorname{Var}_{i\sim\pi_{\text{ref}}^t}
\bigl[(Wu_t)_i\bigr].
\end{equation}
Thus, the leading term of the token-level KL is one half of the variance of the logit perturbation. Unlike an isotropic penalty, this quadratic weights interventions by their effect on relative logits under the reference distribution. In particular, a constant shift of all logits incurs zero cost and leaves token probabilities unchanged.
\end{remark}

We note that the approximation is local: neither the KL nor its Fisher quadratic globally bounds the other. Both are bounded above by
$\frac12\|Wu_t\|_2^2
\le \frac12\sigma_{\max}(W)^2\|u_t\|_2^2$
(Appendix~\ref{app:kl-bound}).

\subsection{Sequence-level KL gradient}
\label{sec:seqkl-grad}

To characterize the terms omitted by the quadratic surrogate, let
$K(u):=\mathcal E_{\mathrm{KL}}(u)$
and define the suffix log-ratio
\begin{equation*}
\ell_{>t}(y;u)
:=
\sum_{\tau=t+1}^T
\log
\frac{\pi_u(y_\tau\mid x,y_{<\tau})}
{\pi_{\text{ref}}(y_\tau\mid x,y_{<\tau})}.
\end{equation*}

\begin{theorem}[Sequence-level KL gradient decomposition]
\label{thm:seqkl-grad}
For every position $t$,
\begin{equation}
\label{eq:seqkl-grad}
\begin{aligned}
\nabla_{u_t}K(u)
&=g_t^{\mathrm{an}}(u)+g_t^{\mathrm{suffix}}(u),\\
g_t^{\mathrm{an}}(u)
&:=
\E_{y_{<t}\sim\pi_u}[F_t(u_t)]u_t,\\
g_t^{\mathrm{suffix}}(u)
&:=
\E_{y\sim\pi_u}\!\left[
\nabla_{u_t}\log\pi_u(y_t\mid x,y_{<t})
\,\ell_{>t}(y;u)
\right].
\end{aligned}
\end{equation}
\end{theorem}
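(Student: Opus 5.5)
We start from the sequence-level form $K(u)=\sum_{y}\pi_u(y\mid x)\,L(y;u)$, where $L(y;u):=\log\frac{\pi_u(y\mid x)}{\pi_{\text{ref}}(y\mid x)}=\sum_{\tau=1}^T\log\frac{\pi_u(y_\tau\mid x,y_{<\tau})}{\pi_{\text{ref}}(y_\tau\mid x,y_{<\tau})}$. The sum runs over the finite set of length-$T$ responses, and every term is smooth in $u$. So we may differentiate term by term with no interchange-of-limits issues. The product rule and $\nabla\pi_u=\pi_u\nabla\log\pi_u$ give
\begin{equation*}
\nabla_{u_t}K(u)=\E_{y\sim\pi_u}\!\big[\nabla_{u_t}\log\pi_u(y\mid x)\,L(y;u)\big]+\E_{y\sim\pi_u}\!\big[\nabla_{u_t}L(y;u)\big].
\end{equation*}

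The key structural fact is that $h_\tau(x,y_{<\tau})$ does not depend on $u$, and $u_t$ enters only the factor at position $t$. Hence, for a fixed $y$,
\begin{equation*}
\nabla_{u_t}\log\pi_u(y\mid x)=\nabla_{u_t}L(y;u)=\nabla_{u_t}\log\pi_u(y_t\mid x,y_{<t})=W^\top(e_{y_t}-p_t),
\end{equation*}
with $p_t=\pi_u(\cdot\mid x,y_{<t})$. We condition on $y_{<t}$. The conditional mean of this score over $y_t\sim p_t$ is $W^\top(p_t-p_t)=0$, so the second expectation above vanishes. In the first expectation we split $L=\ell_{<t}+\ell_t+\ell_{>t}$, where $\ell_{<t}$ collects positions $\tau<t$ and $\ell_t$ is the position-$t$ log-ratio. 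The term $\ell_{<t}$ depends only on $y_{<t}$, so by the tower property its contribution is $\E[\ell_{<t}\,\E[\text{score}\mid y_{<t}]]=0$. The $\ell_{>t}$ contribution is exactly $g_t^{\mathrm{suffix}}(u)$ as defined in the statement.

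It remains to show that the $\ell_t$ contribution equals $g_t^{\mathrm{an}}(u)$, and this is the only real computation. Fix the prefix, and write $z=Wh_t+b$ and $\delta=Wu_t$. Then $\ell_t=\delta_{y_t}-\big(A(z+\delta)-A(z)\big)$. The bracketed term is constant in $y_t$, so it drops out against the zero-mean score. What remains is
\begin{equation*}
\E_{y_t\sim p_t}\!\big[W^\top(e_{y_t}-p_t)\,\delta_{y_t}\big]=W^\top\big(\diag(p_t)-p_tp_t^\top\big)\delta=W^\top C_{p_t}W u_t=F_t(u_t)\,u_t,
\end{equation*}
using the notation of \eqref{eq:softmax-jacobian} and \eqref{eq:fisher-u}. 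Taking the expectation over $y_{<t}\sim\pi_u$, and noting that $u_t$ is deterministic, gives $\E_{y_{<t}\sim\pi_u}[F_t(u_t)]\,u_t$. Summing the three contributions yields \eqref{eq:seqkl-grad}.

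The main point requiring care is the bookkeeping that shows which pieces vanish. The prefix piece and the explicit-dependence piece cancel by the zero-mean score, while the suffix piece survives because $\ell_{>t}$ depends on $y_t$. The remaining steps are the direct calculation above.
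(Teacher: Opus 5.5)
Your proof is correct, and it is organized differently from the paper's. The paper differentiates the chain-rule form $K(u)=\sum_{\tau}\E_{y_{<\tau}\sim\pi_u}[\mathcal{K}_\tau(u_\tau;y_{<\tau})]$ from \eqref{eq:kl-effort}, which separates two effects:
\begin{itemize}
\item The explicit dependence on $u_t$ (only $\tau=t$) is handled by differentiating the closed form \eqref{eq:per-token-kl-exact}. This makes $F_t(u_t)u_t$ appear as the gradient of a Bregman divergence at a fixed prefix.
\item The dependence of the prefix laws on $u_t$ gives a trajectory score term. After collapsing that score to position $t$ and applying the tower property, the paper has to use the chain rule a second time, on the conditional suffix laws. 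Only then does it recognize $\mathcal{K}_{t+1}+\sum_{\tau\ge t+2}\E[\mathcal{K}_\tau\mid y_{\le t}]$ as $\E[\ell_{>t}\mid y_{\le t}]$.
\end{itemize}
You apply the score-function identity once, to $\E_{y\sim\pi_u}[L(y;u)]$. Everything then reduces to the single fact $\E[W^\top(e_{y_t}-p_t)\mid y_{<t}]=0$. That fact removes the pathwise term $\E[\nabla_{u_t}L]$ and the $\ell_{<t}$ contribution. The suffix term appears verbatim, with no conditional KL to identify. The analytic term comes out as the conditional covariance of the score with $\ell_t$, namely $W^\top C_{p_t}Wu_t$.

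\textbf{What your route gives.} It is shorter and never uses the chain rule of Appendix~\ref{app:chain-rule}. It also shows that $g_t^{\mathrm{an}}$ is the exact conditional expectation, given $y_{<t}$, of $\nabla_{u_t}\log\pi_u(y_t\mid x,y_{<t})\,\ell_t$. In other words, the analytic term is a Rao--Blackwellized piece of the REINFORCE gradient.

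\textbf{What the paper's route gives.} It makes the interpretation stated after the theorem transparent. There, $g_t^{\mathrm{an}}$ is literally the prefix-averaged gradient of the token-level KL, i.e., the direct effect of $u_t$. The suffix term is the effect of $u_t$ on the prefix distribution seen by the later per-token KL terms. This mirrors the decomposition \eqref{eq:kl-effort} that motivates the Fisher surrogate.
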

The proof is given in Appendix~\ref{appendix:seqkl-grad}.
The decomposition separates the direct effect of $u_t$ on the current token distribution from its effect on future prefixes. At a fixed prefix, the token-level KL gradient is exactly $F_t(u_t)u_t$; averaging over steered prefixes gives $g_t^{\mathrm{an}}(u)$. The suffix term captures the resulting change in future KL contributions through the prefix distribution.

Comparing~\eqref{eq:seqkl-grad} with the surrogate gradient $\bar F_tu_t$ reveals two sources of approximation error: replacing the iterate-dependent matrix $\E_{y_{<t}\sim\pi_u}[F_t(u_t)]$ by the fixed reference matrix $\bar F_t$, and omitting the suffix score-function term. The next subsection bounds both errors to establish when the quadratic surrogate accurately approximates the full KL gradient.

\subsection{Accuracy of the Fisher surrogates}
\label{sec:approx-cost}

Consider the three cost matrices
\begin{align}
\tilde F_t^u
&:=\E_{y_{<t}\sim\pi_u}[F_t(u_t)],
\label{eq:surr-onon}\\
\bar F_t^u
&:=\E_{y_{<t}\sim\pi_u}[F_t(0)],
\label{eq:surr-onoff}\\
\bar F_t
&:=\E_{y_{<t}\sim\pi_{\text{ref}}}[F_t(0)].
\label{eq:surr-offoff}
\end{align}
The first gives the exact analytic term
$g_t^{\mathrm{an}}(u)=\tilde F_t^uu_t$.
The second evaluates the token Fisher at zero steering while retaining steered prefixes. The third also replaces the prefix distribution by the reference law.

\begin{theorem}[Bounds on the Fisher approximations]
\label{thm:surrogate-equiv}
Let
$C_1=3\sigma_{\max}(W)^3$
and
$C_2=\sqrt V\,\sigma_{\max}(W)^3$.
For all $u$ and every position $t$,
\begin{equation}
\label{eq:surrogate-bounds}
\begin{aligned}
\|\tilde F_t^u-\bar F_t^u\|_2
&\le C_1\|u_t\|_2,\\
\|\bar F_t^u-\bar F_t\|_2
&\le C_2\sum_{\tau<t}\|u_\tau\|_2.
\end{aligned}
\end{equation}
\end{theorem}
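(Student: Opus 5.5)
The argument has two independent parts: a Lipschitz bound on the token Fisher as a function of the logits, and a total-variation bound on how far steering moves the prefix law. Both matrices in each difference are of the form $W^\top C_p W$ with $C_p=\diag(p)-pp^\top$, so throughout we reduce to bounds on $C_p$ and use $\|W^\top M W\|_2\le\sigma_{\max}(W)^2\|M\|_2$.

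\textbf{First bound.} Fix a prefix $y_{<t}$ and let $z=Wh_t+b$. Then $F_t(u_t)-F_t(0)=W^\top\bigl(C_{\softmax(z+Wu_t)}-C_{\softmax(z)}\bigr)W$. We bound $\|C_p-C_q\|_2$ for $p=\softmax(z+\delta)$ and $q=\softmax(z)$. Write
\begin{equation*}
C_p-C_q=\diag(p-q)-(p-q)p^\top-q(p-q)^\top .
\end{equation*}
The three pieces have spectral norms at most $\|p-q\|_\infty$, $\|p-q\|_2\|p\|_2$ and $\|q\|_2\|p-q\|_2$, so $\|C_p-C_q\|_2\le 3\|p-q\|_2$. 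Along the path $z+s\delta$, $s\in[0,1]$, the softmax Jacobian is $C_{p_s}$, whose spectral norm is at most $1$ (it is PSD with trace at most $1$). Hence $\|p-q\|_2\le\|\delta\|_2\le\sigma_{\max}(W)\|u_t\|_2$. Combining gives
\begin{equation*}
\|F_t(u_t)-F_t(0)\|_2\le 3\sigma_{\max}(W)^3\|u_t\|_2
\end{equation*}
uniformly over prefixes. Averaging under $\pi_u$ and applying Jensen's inequality for the norm yields the bound with $C_1$.

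\textbf{Second bound.} Here the integrand $F_t(0)$ is the same; only the prefix law changes. So
\begin{equation*}
\|\bar F_t^u-\bar F_t\|_2\le \sup_{y_{<t}}\|F_t(0)\|_2\cdot \|\pi_u(y_{<t})-\pi_{\text{ref}}(y_{<t})\|_1 ,
\end{equation*}
where the $\ell_1$ norm is over prefixes, i.e.\ twice total variation. We have $\|F_t(0)\|_2\le\sigma_{\max}(W)^2\|C_q\|_2\le\sigma_{\max}(W)^2$, and a sharper constant can be used if the factor of $2$ from $\ell_1$ versus TV needs absorbing. For the prefix laws we use the hybrid (telescoping) argument: replace the conditionals one position at a time. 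This gives
\begin{equation*}
\|\pi_u(y_{<t})-\pi_{\text{ref}}(y_{<t})\|_1\le\sum_{\tau<t}\sup_{y_{<\tau}}\|\pi_u^\tau-\pi_{\text{ref}}^\tau\|_1 .
\end{equation*}
Each term satisfies $\|p-q\|_1\le\sqrt V\|p-q\|_2\le\sqrt V\,\sigma_{\max}(W)\|u_\tau\|_2$ by the Lipschitz bound from the first part. Multiplying out gives $\sqrt V\sigma_{\max}(W)^3\sum_{\tau<t}\|u_\tau\|_2$.

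\textbf{Main obstacle.} The delicate step is bookkeeping constants so that they match $C_2$ exactly. The factor from $\|\cdot\|_1$ versus TV, and the bound on $\sup\|C_q\|_2$, must combine to give at most $\sigma_{\max}(W)^2$ times $\sqrt V\sigma_{\max}(W)$. To absorb the factor of $2$, I would first try centering: write the difference of expectations as $\sum_y (P-Q)(y)\bigl(F(y)-M\bigr)$ for a constant matrix $M$, and note that each $C_q$ has entries confined to a range of width at most $1/2$ in the appropriate sense. Alternatively, use $\|C_q\|_2\le 1/2$, which holds since $x^\top C_q x$ is a variance of $x_i$ under $q$ and is at most $\|x\|_\infty^2$ times $1/4$ in the relevant sense. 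Either way the structural argument — Lipschitzness of softmax plus the telescoping TV bound — is the core, and the constants only need to be checked against the stated ones.
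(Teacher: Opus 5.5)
Your proposal is correct and follows the paper's proof essentially step for step: for $C_1$, the same $\|C_p-C_q\|_2\le 3\|p-q\|_2$ covariance-map bound combined with softmax Lipschitzness and Jensen; for $C_2$, the same estimate $\|\E_{\mu_1}[M]-\E_{\mu_2}[M]\|_2\le B\,\|\mu_1-\mu_2\|_1$ combined with the one-conditional-at-a-time telescoping of the prefix law. The ``main obstacle'' you describe does not arise, because you never convert $\ell_1$ to total variation, so $\sigma_{\max}(W)^2\cdot\sqrt V\,\sigma_{\max}(W)$ already equals $C_2$ exactly; neither centering nor the sharper bound $\|C_q\|_2\le 1/2$ (true by Popoviciu's inequality, since the entries of a unit vector span a range of at most $\sqrt2$) is needed.
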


The proof is given in Appendix~\ref{app:proofOfSurrogateEquiv}.
The first bound controls the change in the token Fisher, and  the second controls the change in the prefix distribution. After multiplication by $u_t$, both yield gradient errors of order $O(\|u\|^2)$ near the origin, where $\|u\|$ denotes the Euclidean norm of the stacked steering vectors.

The remaining discrepancy is the omitted suffix term.

\begin{proposition}[Local bound on the suffix term]
\label{prop:suffix-second-order}
For a fixed horizon $T$ and every $t\le T$,
\begin{equation*}
g_t^{\mathrm{suffix}}(u)=O(\|u\|^2)
\qquad\text{as }u\to0.
\end{equation*}
\end{proposition}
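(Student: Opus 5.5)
The suffix term is a product of two factors that each vanish to first order in $u$; I will bound each separately and multiply. Write $s_t(y;u):=\nabla_{u_t}\log\pi_u(y_t\mid x,y_{<t})=W^\top(e_{y_t}-\pi_u^t)$. This is bounded uniformly: $\|e_{y_t}-\pi_u^t\|_2\le\sqrt2$, so $\|s_t\|_2\le\sqrt2\,\sigma_{\max}(W)$. The log-ratio factor vanishes at the origin. For each $\tau>t$ we have
\[
\log\frac{\pi_u(y_\tau\mid x,y_{<\tau})}{\pi_{\text{ref}}(y_\tau\mid x,y_{<\tau})}=(Wu_\tau)_{y_\tau}-\bigl(A(z_\tau+Wu_\tau)-A(z_\tau)\bigr).
\]
Since $\nabla A$ is a probability vector, the mean value theorem bounds this by $2\|Wu_\tau\|_\infty\le 2\sigma_{\max}(W)\|u_\tau\|_2$, uniformly over prefixes. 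Hence $|\ell_{>t}(y;u)|\le 2\sigma_{\max}(W)\sum_{\tau>t}\|u_\tau\|_2=O(\|u\|)$ uniformly in $y$. Plugging in the two crude bounds only gives $g_t^{\mathrm{suffix}}=O(\|u\|)$, so the second order must come from a cancellation.

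\textbf{Step 1 (the cancellation).} Linearize $\ell_{>t}$ at $u=0$. Since $\nabla A(z_\tau)=\pi_{\text{ref}}^\tau$, we get
\[
\ell_{>t}(y;u)=\sum_{\tau>t}\bigl\langle e_{y_\tau}-\pi_{\text{ref}}^\tau,\,Wu_\tau\bigr\rangle+r(y;u),
\]
where the remainder satisfies $|r|\le\tfrac12\sum_\tau\|Wu_\tau\|_2^2=O(\|u\|^2)$ uniformly, because $\nabla^2A=C_\pi\preceq I$. The remainder contributes at most $\|s_t\|\cdot|r|=O(\|u\|^2)$. In the linear term, replace $\pi_{\text{ref}}^\tau$ by $\pi_u^\tau$. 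Each such replacement costs $\|\pi_u^\tau-\pi_{\text{ref}}^\tau\|_2\,\|Wu_\tau\|_2\le\|Wu_\tau\|_2^2$, since softmax is 1-Lipschitz in $\ell_2$. The linear term thus becomes $\sum_{\tau>t}\langle e_{y_\tau}-\pi_u^\tau,Wu_\tau\rangle+O(\|u\|^2)$.

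\textbf{Step 2 (the martingale property).} By the tower property, for each $\tau>t$,
\[
\E_{y\sim\pi_u}\bigl[s_t(y;u)\,\langle e_{y_\tau}-\pi_u^\tau,Wu_\tau\rangle\bigr]=\E\bigl[s_t\,\E_{y_\tau\sim\pi_u^\tau}[\langle e_{y_\tau}-\pi_u^\tau,Wu_\tau\rangle]\bigr]=0.
\]
This holds because $s_t$ is measurable with respect to $y_{\le t}\subseteq y_{<\tau}$ and the steered conditional mean of $e_{y_\tau}$ is exactly $\pi_u^\tau$. Every first-order contribution therefore cancels exactly. What remains is the expectation of $s_t$ times an $O(\|u\|^2)$ quantity that is uniform in $y$, which gives $\|g_t^{\mathrm{suffix}}(u)\|_2\le C\sigma_{\max}(W)^3\sum_{\tau>t}\|u_\tau\|_2^2=O(\|u\|^2)$. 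The constant is explicit, with $T$ entering only through the sum, which matches the fixed-horizon assumption.

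The main obstacle is the cancellation in Step 2: one must center the linearized log-ratio with the \emph{steered} conditional probabilities rather than the reference ones. The second-order cost of this switch is handled by the Lipschitz bound on softmax in Step 1. All other bounds are uniform over prefixes, so no further control of the prefix law is needed. As an alternative check, one could note that $g_t^{\mathrm{suffix}}(0)=0$ and that $g^{\mathrm{suffix}}$ is smooth in $u$ (a finite sum over sequences), then show the Jacobian at $0$ vanishes by the same martingale argument. The direct bound above is preferable because it gives explicit constants.
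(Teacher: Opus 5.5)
Your proof is correct, and it takes a genuinely different route from the paper.

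\textbf{The paper's route.} The paper never bounds $g_t^{\mathrm{suffix}}$ directly. It shows $\nabla K(0)=0$ and that $\nabla^2K(0)$, the Fisher information $\E_{\pi_{\text{ref}}}[SS^\top]$ of the stacked sequence score $S$, equals $\operatorname{blkdiag}(\bar F_1,\dots,\bar F_T)$. The cross-position blocks vanish by the same martingale property you use, applied under $\pi_{\text{ref}}$. It then Taylor-expands the full gradient to get $\nabla_{u_t}K(u)=\bar F_tu_t+O(\|u\|^2)$. Finally it subtracts the expansion $g_t^{\mathrm{an}}(u)=\bar F_tu_t+O(\|u\|^2)$ supplied by Theorem~\ref{thm:surrogate-equiv}.

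\textbf{What each route buys.} The paper's route gives a structural fact you do not get: the frozen Fisher is exactly the Hessian of the sequence-level KL at the origin. This yields the endpoint of \eqref{eq:surrogate-ladder} in one stroke. However, its remainder rests on an unquantified Hessian-Lipschitz constant for a sum over $V^T$ sequences, plus the $\sqrt V$-dependent constant of Theorem~\ref{thm:surrogate-equiv}. That is why the statement is only asymptotic and for fixed $T$.

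By centering at the steered conditionals $\pi_u^\tau$, you make the first-order cancellation exact at every $u$, not only at the origin. You obtain the global bound $\|g_t^{\mathrm{suffix}}(u)\|_2\le\tfrac{3\sqrt2}{2}\sigma_{\max}(W)^3\sum_{\tau>t}\|u_\tau\|_2^2$ without Theorem~\ref{thm:surrogate-equiv} or any Taylor expansion of $K$. This is stronger than you state. Since $\sum_{\tau>t}\|u_\tau\|_2^2\le\|u\|^2$, the bound is independent of $T$ and $V$, so the fixed-horizon hypothesis is unnecessary for this term.

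\textbf{Two small remarks.} First, your opening sentence is off: the score $s_t$ is $O(1)$ rather than vanishing, though you correct course immediately.

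Second, your Step~1 needs no approximation. By \eqref{eq:per-token-kl-exact}, the per-token log-ratio equals $\langle e_{y_\tau}-\pi_u^\tau,Wu_\tau\rangle+\KL(\pi_u^\tau\|\pi_{\text{ref}}^\tau)$ exactly. Hence $g_t^{\mathrm{suffix}}(u)=\E_{\pi_u}\bigl[s_t\sum_{\tau>t}\KL(\pi_u^\tau\|\pi_{\text{ref}}^\tau)\bigr]$. This is the representation \eqref{eq:traj-after-tower}--\eqref{eq:S-def} already derived in the proof of Theorem~\ref{thm:seqkl-grad}. Combining it with \eqref{eq:kl-global-bound} improves your constant to $\tfrac{\sqrt2}{2}\sigma_{\max}(W)^3$.
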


Combining these results gives
\begin{equation}
\label{eq:surrogate-ladder}
\bar F_tu_t
=
\bar F_t^uu_t+O(\|u\|^2)
=
\tilde F_t^uu_t+O(\|u\|^2)
=
\nabla_{u_t}K(u)+O(\|u\|^2).
\end{equation}
A proof of the proposition and~\eqref{eq:surrogate-ladder} is provided in Appendix~\ref{app:proof-suffix-second-order}.
Thus, the frozen-Fisher regularizer gradient agrees with the full sequence-level KL gradient to first order. This is a local statement for fixed $T$; it does not guarantee accuracy for large interventions or bound accumulated error along an optimization trajectory.

Among the three matrices in \eqref{eq:surr-offoff}, only $\bar F_t$ is independent of the current iterate. Since MISVO initializes $u=0$, the first $K$ rollouts follow the reference policy and provide an unbiased estimate
\begin{equation*}
\widehat F_t
=
\frac{1}{K}\sum_{i=1}^K
F_t\!\left(0;x,y_{<t}^{(i,1)}\right),
\end{equation*}
where the prefix dependence is shown explicitly.
We reuse this estimate throughout optimization, so Fisher estimation requires no additional generations.
The guarantees above concern the population matrices; the empirical gradient also contains the sampling error
$(\widehat F_t-\bar F_t)u_t$.

\subsection{Computational overhead}
\label{sec:complexity}

Over $N$ iterations with $K$ rollouts each, MISVO uses $NK$ generations and reward evaluations, matching the candidate budget of the BoN baseline. Its additional work includes reward-gradient estimation, Fisher estimation, and steering updates. The reward-gradient projections in~\eqref{eq:update} cost $O(NKTVd)$ in a direct implementation. We consider two ways to compute the regularizer gradient.

\paragraph{Explicit Fisher matrices.}
Forming $F_t=W^\top C_{\pi^t}W$ costs $O(Vd^2)$ per position and reference sample. Constructing all $\widehat F_t$ from the initial $K$ rollouts therefore costs $O(KTVd^2)$ and requires $O(Td^2)$ storage. Subsequent matrix--vector products cost $O(NTd^2)$, giving a total of
$O(KTVd^2+NTd^2)$.
Reconstructing an iterate-dependent Fisher at every step would instead incur $O(NKTVd^2)$ construction cost.

\paragraph{Matrix-free Fisher products.}
The matrices need not be formed explicitly. For a token distribution $p$ and $v=Wu_t$,
\begin{equation*}
F_tu_t=W^\top C_pv,
\qquad
C_pv=p\odot v-p(p^\top v).
\end{equation*}
Each product costs $O(Vd)$.
Retaining $S$ reference hidden states per position permits reconstruction of the corresponding token probabilities and averaging of these products, with total cost $O(NTSVd)$ and retained-state storage $O(TSd)$.
Our implementation uses this approach with $S=K$ initial rollouts. Vocabulary-sized intermediates require additional working memory; measured runtime and peak memory are reported in Appendix~\ref{app:runtime}.

\section{Experiments}
\label{sec:experiments}

We evaluate MISVO on preference-based generation and code generation across four frozen models with approximately 1B--14B parameters. We compare reward at matched generation budgets, assess diversity and coherence, and measure distributional deviation on reference-policy prefixes.

\subsection{Setup}
\label{sec:exp-setup}

\paragraph{Models.}
We use four open-weight instruction-tuned base models with different architectures, hidden sizes, and parameter counts: LFM2.5-1.2B-Instruct~\cite{liquidai2025lfm2}, Gemma3-4B-IT~\cite{gemmateam2025gemma3technicalreport}, Llama-3-8B-Instruct~\cite{llama3modelcard}, and Phi-4 (14B)~\cite{abdin2024phi4technicalreport}.
All four models are kept frozen throughout: only the per-token steering vectors $u_{1:T}$ are optimized, and the LM head $(W, b)$ is reused both for the forward pass and for the matrix-free Fisher--vector products in~\eqref{eq:fisher-u}.

\paragraph{Tasks.}
We consider two complementary alignment settings.
\textbf{(a)~SHP}~\cite{pmlr-v162-ethayarajh22a}: open-ended preference-style generation. We sample $500$ prompts uniformly at random from the Stanford Human Preferences test set and score each generation with Skywork-Reward-V2-Qwen3-0.6B~\cite{liu2025skywork}, a publicly released $0.6$B-parameter preference reward model. We evaluate all four base models on this task.
\textbf{(b)~MBPP+}~\cite{evalplus}: program synthesis with an executable verifier. We use the first set of $120$ problems and reward each generation by the fraction of held-out unit tests it passes (in $[0,1]$); this provides a black-box, non-differentiable reward. We evaluate the three smaller models (LFM2.5-1.2B, Gemma3-4B, Llama-3-8B) on MBPP+. For both tasks, we cap generation at $512$ new tokens.

\paragraph{Best-so-far reporting.}
Our primary metric is the highest reward among all $KN$ generations, where $K$ is the number of rollouts per step and $N$ is the number of optimization steps. MISVO and AISP generate candidates iteratively; Best-of-$N$ samples all candidates from the reference policy. This metric evaluates the response selected after search. We also report within-step mean and maximum rewards to assess the evolving steered policy.

\paragraph{Baselines.}
\textbf{Best-of-$N$ (top-$p$)} draws $K \cdot N$ candidate completions from the reference policy $\pi_{\text{ref}}$ under the shared sampler and returns the highest-reward one; the budget is matched exactly to the total number of rollouts drawn by \textsc{MISVO} and AISP. 
\textbf{AISP}~\cite{kanai2025test} performs sampling-based optimal control in pre-logit space using the importance-weighted update. We set $\sigma^2 = 0.5$ and $\alpha=0.999$ for their method.
In line with the experimental results in \cite{kanai2025test}, experiments with \textit{RE-Control} \cite{kong2024aligning} yielded inferior results, and due to computational limitations, we did not include it in our comparisons.

\paragraph{Our method.}
MISVO uses the frozen-reference Fisher $\bar F_t$ in~\eqref{eq:fisher-effort}. We estimate it from the first-step rollouts, for which $u=0$, and reuse the estimate throughout optimization. These rollouts are included in the $KN$ budget.

\input{tables/shp_skywork}

\input{tables/mbpp}

\paragraph{Metrics.}
We average all metrics over prompts. \textbf{Reward} is the Skywork score on SHP and the unit-test pass fraction on MBPP+, reported for the best response found during search. \textbf{Diversity} is $\prod_{n\in\{2,3,4\}}(1-\mathrm{rep}_n)$, where $\mathrm{rep}_n$ is the fraction of repeated $n$-grams within a response~\cite{kanai2025test}. \textbf{Coherence} is the cosine similarity between SimCSE embeddings of the prompt and response~\cite{gao2021simcse}. The latter two metrics measure repetition and semantic relatedness; they do not establish distributional proximity, factual accuracy, or absence of reward exploitation.

\paragraph{Implementation details.}
For each (model, task) pair we use the same \textsc{MISVO} and AISP hyperparameters across all prompts in the split. On both datasets we use $K = 16$, $N = 16$ for LFM2.5-1.2B and Gemma3-4B, and $K = 32$, $N = 32$ for Llama-3-8B and Phi-4; the learning rate is $0.1$ and the regularization trade-off is $\lambda=1$.  
All runs use the Hugging Face \texttt{transformers} stack, fp16 weights for Llama-3 and bfloat16 for the others, and run on H200 GPUs. We report results as mean $\pm$ standard deviation across 3 seeds.

\subsection{Main results}
\label{sec:exp-main}

Across both tasks (Tables~\ref{tab:shp} and~\ref{tab:mbpp}), \textsc{MISVO} achieves the highest reported mean reward in six of the seven model--task settings. On SHP it improves over both BoN and AISP on three of the four models; the single exception is Llama-3-8B, where \textsc{MISVO} still improves over BoN but is overtaken by AISP. On MBPP+ \textsc{MISVO} leads on all three models, with the largest gain on the smallest. 

Diversity and coherence scores remain close to those of BoN. These diagnostics suggest that reward gains are not accompanied by substantial deterioration in the measured properties, but do not establish proximity to $\pi_{\mathrm{ref}}$. Section~\ref{sec:measured-kl} evaluates token-distribution deviation directly.

\paragraph{Reward trajectories.}
Figure~\ref{fig:lfmTrajectoryShp} reports the within-step mean reward, within-step maximum reward, and cumulative best reward for LFM2.5-1.2B on SHP. Within-step statistics use the $K$ rollouts from each iteration; the horizontal axis gives cumulative generations. Curves show means over three seeds, with shading indicating one standard deviation. Corresponding results for the other SHP models appear in Figures~\ref{fig:gemmaTrajectoryShp}, \ref{fig:llamaTrajectoryShp}, and~\ref{fig:phiTrajectoryShp}. MISVO's increasing within-step mean supports improvement in its sampling policy. AISP's declining mean at later iterations is consistent with oversteering, although this metric alone does not identify the cause.

\begin{figure}[ht]
    \centering
    \includegraphics[width=\linewidth,trim=0cm 0.99cm 0cm 0cm]{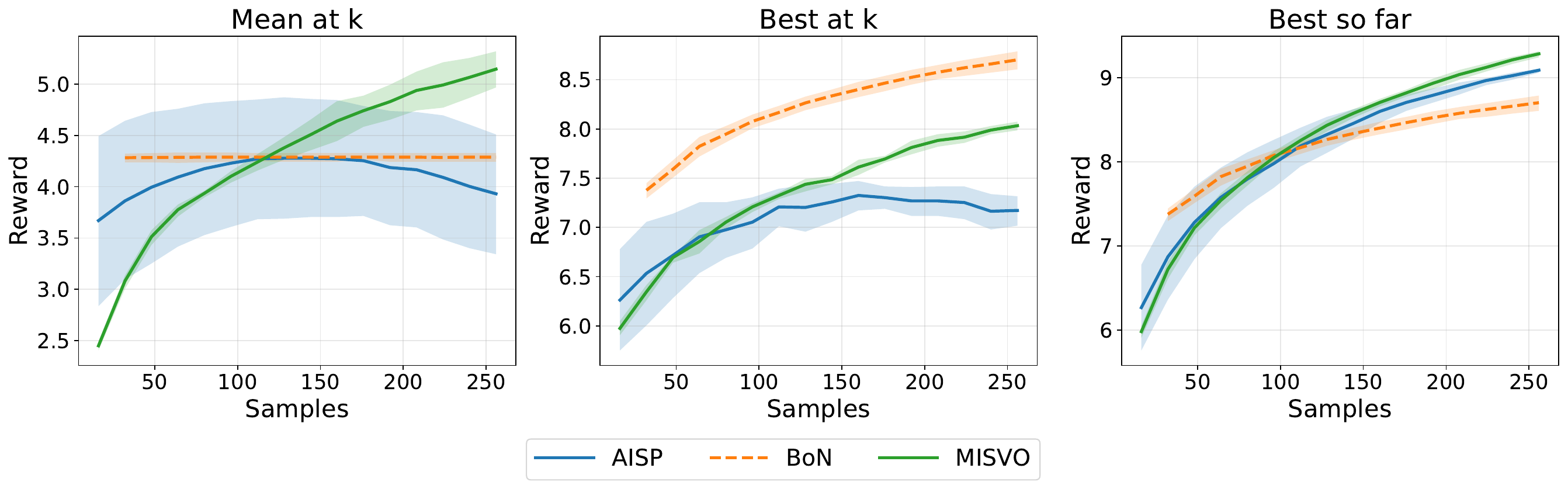}
    \caption{Reward trajectory for LFM2.5-1.2B/SHP as a function of the cumulative number of sampled completions. \textbf{Left:} mean reward within each step. \textbf{Middle:} maximum reward within each step. \textbf{Right:} cumulative maximum reward.}
    \label{fig:lfmTrajectoryShp}
\end{figure}

\subsection{Direct measurement of policy deviation}
\label{sec:measured-kl}

To assess distributional deviation beyond generation-level diagnostics, we measure token-level KL on reference-policy prefixes. Given $K$ reference rollouts, define

\begin{equation}
\label{eq:kl-estimator}
  \widehat{\text{KL}}_\text{ref}(u)
  \;=\;
  \frac{1}{K}\sum_{i=1}^{K}\sum_{t}
  \mathrm{KL}\!\left(
    \pi_{u}(\cdot \mid x, y^{(i)}_{<t})
    \;\big\|\;
    \pi_{\mathrm{ref}}(\cdot \mid x, y^{(i)}_{<t})
  \right),
  \quad 
  y^{(i)} \sim \pi_\text{ref}
\end{equation}
where each token-level KL is computed over the full vocabulary. This estimates the sum of expected token KLs under \emph{reference-policy} prefixes. It is distinct from the sequence-level KL in~\eqref{eq:kl-effort}, whose expectations use \emph{steered-policy} prefixes. The diagnostic measures deviation on reference contexts and does not capture all changes in prefix visitation induced by steering.

\input{tables/policyDeviation}

Table~\ref{tab:measured-kl} reports results over 100 sampled prompts. MISVO attains the highest reported reward and lower reference-prefix KL than AISP and unregularized steering. The remaining regularizers do not exceed BoN in reward in this comparison. The direct-KL variant also has lower reward and higher reference-prefix KL than MISVO. These results describe the evaluated configurations; they do not establish a global reward--KL frontier or identify the cause of the direct-KL variant's performance.

\section{Conclusion}
\label{sec:conclusion}

We introduced MISVO, which optimizes pre-logit interventions using a Fisher-quadratic penalty derived from local KL geometry. An exact sequence-level gradient decomposition and local error bounds establish first-order agreement between the frozen-Fisher gradient and the full KL gradient. Across seven model--task settings, MISVO achieves the highest reported mean reward in six, with diversity and coherence scores close to those of BoN.

\paragraph{Limitations.}
The approximation is local and does not guarantee small sequence-level KL after large interventions. The empirical deviation metric uses reference-policy prefixes and therefore does not directly measure sequence-level KL. On SHP, the same reward model is used for optimization and primary evaluation, limiting conclusions about independent preference quality. MISVO also incurs iterative optimization overhead and, in our implementation, higher peak memory than the baselines. Its integration into high-throughput inference systems and the effects of more memory-efficient Fisher products remain directions for future work.

\section*{Acknowledgment}
This work was supported in part by the JHU Provost Discovery Award (2025–2027). MF was also supported in part by the  National Science Foundation (NSF) under Grant 2515978. D.K. was also supported in part by the Defense Advanced Research Projects Agency (DARPA) under Contract No. HR001125C0304 and by the Office of Naval Research (ONR) under Grant No. N0001424-1-2089. Any opinions, findings, conclusions, or recommendations expressed in this material are those of the authors and do not necessarily reflect the views of DARPA. We acknowledge the computational resources provided by the Johns Hopkins Data Science and AI Institute (DSAI) cluster. We thank Benjamin Van Durme and Gillian Hadfield for helpful discussions.

\clearpage

\bibliography{ref}

\clearpage

\appendix

\section{Related Work}
\label{sec:related_works}

\paragraph{Training-time alignment.}
RLHF optimizes model parameters using preference-derived rewards~\cite{christiano2017deep, ouyang2022training, stiennon2020learning}. DPO expresses a KL-regularized preference objective as a classification loss~\cite{rafailov2024direct}; other approaches modify the preference loss or feedback source~\cite{azar2024general, ethayarajh2024kto, meng2024simpo, bai2022constitutional}. These methods produce model weights for deployment. MISVO instead optimizes interventions for the reward available at inference.

\paragraph{Prompt-based control.}
Instructions, demonstrations, chain-of-thought prompting, and self-refinement guide generation through token inputs~\cite{brown2020language, dong2024survey, wei2022chain, wang2023selfconsistency, madaan2023self}. Their effectiveness depends on the model's response to the supplied context. MISVO directly modifies the pre-logit representation and explicitly regularizes the resulting distributional change.

\paragraph{Output-level test-time alignment.}
Best-of-$N$ selects the highest-reward completion among reference-policy samples~\cite{stiennon2020learning, beirami2024theoretical}. Guided decoding modifies token probabilities using reward models, value functions, or auxiliary scorers~\cite{khanov2024args, mudgal2024controlled, krause2021gedi, liu2021dexperts, yang2021fudge, xu2025genarm}. MISVO instead optimizes a sequence of hidden-space interventions using a terminal reward.

\paragraph{Representation-level test-time alignment.}
RE-Control uses a learned value function to guide hidden-state interventions~\cite{kong2024aligning}. AISP formulates pre-logit steering as sampling-based control and updates perturbations through importance-weighted reward evaluations~\cite{kanai2025test}. BRT-Align uses latent backward reachability and a learned safety value function to determine when to intervene~\cite{karnik2025brt}. Static activation-steering methods estimate directions from offline data~\cite{subramani2022extracting, turner2023steering, rimsky2024steering, li2023inferencetime, zou2023representation}; input-dependent variants also use optimal transport~\cite{abdullaev2026concept}. MISVO optimizes position-specific interventions against a runtime reward and regularizes them using the Fisher geometry of the output distribution. Its regularizer requires no auxiliary network, and its gradient agrees locally with the full sequence-level KL gradient to first order.

\newpage

\section{Theoretical Results}
\label{app:proofs}

\subsection{Preliminary lemmas}
\label{app:lemmas}
 
\begin{lemma}[Softmax log-partition identities]
\label{lem:logpart}
For any $z\in\R^V$, $\nabla A(z) = \softmax(z)$ and $\nabla^2 A(z) = C_{\softmax(z)}$.
\end{lemma}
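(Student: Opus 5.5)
We prove both identities by direct differentiation of $A(z)=\log\sum_{j=1}^V e^{z_j}$, writing $p=\softmax(z)$ with $p_i=e^{z_i}/\sum_j e^{z_j}$.

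\emph{Gradient.} By the chain rule,
\[
\frac{\partial A}{\partial z_i}=\frac{e^{z_i}}{\sum_j e^{z_j}}=p_i,
\]
so $\nabla A(z)=\softmax(z)$.

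\emph{Hessian.} It remains to differentiate $p_i$ with respect to $z_k$. Apply the quotient rule to $p_i=e^{z_i}/S$, where $S=\sum_j e^{z_j}$ and $\partial S/\partial z_k=e^{z_k}$:
\[
\frac{\partial p_i}{\partial z_k}=\frac{\mathbf 1[i=k]\,e^{z_i}}{S}-\frac{e^{z_i}e^{z_k}}{S^2}=\mathbf 1[i=k]\,p_i-p_ip_k .
\]
In matrix form this is $\nabla^2A(z)=\diag(p)-pp^\top=C_{p}$, which matches the notation introduced in \eqref{eq:softmax-jacobian}.

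This is a routine calculation with no real obstacle; the only care needed is bookkeeping of the diagonal term in the quotient rule. As a sanity check, $C_p\mathbf 1=p-p(p^\top\mathbf 1)=0$, since $p^\top\mathbf 1=1$. This is consistent with the invariance of $\softmax$ under constant logit shifts noted in Remark~\ref{rem:kl-properties}.
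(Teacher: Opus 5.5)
Your proof is correct and follows the paper's argument: differentiate $A$ once to get $\softmax(z)$, then differentiate the softmax to obtain $\partial p_i/\partial z_k = p_i(\mathbf{1}[i=k]-p_k)$, i.e.\ $\diag(p)-pp^\top=C_p$. The only difference is that you show the quotient-rule step explicitly, whereas the paper states that identity directly.
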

\begin{proof}
Directly, $\partial_i A(z) = e^{z_i}/\sum_j e^{z_j} = \softmax(z)_i$. Differentiating once more gives $\partial_j \softmax(z)_i = \softmax(z)_i(\mathbf{1}_{i=j} - \softmax(z)_j)$, which is $C_{\softmax(z)}$ in matrix form.
\end{proof}
 
\begin{lemma}[Properties of $C_p$]
\label{lem:sigma}
For any probability vector $p\in\R^V$, $C_p \succeq 0$, $C_p \mathbf{1} = 0$, $\|C_p\|_2 \leq 1$, and $\tr(C_p) = 1 - \|p\|_2^2$.
\end{lemma}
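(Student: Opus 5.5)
We prove the four properties of $C_p=\diag(p)-pp^\top$ one at a time. Each needs only $p_i\ge 0$ and $\sum_i p_i=1$.

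\textbf{Positive semidefiniteness.} For $v\in\R^V$, expand
\[
v^\top C_p v=\sum_i p_i v_i^2-\Bigl(\sum_i p_i v_i\Bigr)^2 .
\]
This is $\operatorname{Var}_{i\sim p}[v_i]\ge 0$. The same identity is the variance interpretation in Remark~\ref{rem:kl-properties}, taken with $v=Wu_t$. Alternatively, since $\sum_i p_i=1$, Jensen's (or Cauchy--Schwarz) inequality gives $(\sum_i p_i v_i)^2\le\sum_i p_i v_i^2$.

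\textbf{Null vector.} Compute $C_p\mathbf 1=p-p(p^\top\mathbf 1)=p-p=0$.

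\textbf{Trace.} By linearity, $\tr(C_p)=\sum_i p_i-\sum_i p_i^2=1-\|p\|_2^2$.

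\textbf{Spectral norm.} This is the only step needing an actual estimate. Because $C_p$ is symmetric and positive semidefinite, $\|C_p\|_2$ is its largest eigenvalue, so
\[
\|C_p\|_2=\max_{\|v\|_2=1}v^\top C_p v .
\]
For unit $v$, the variance is at most the second moment:
\[
v^\top C_p v=\operatorname{Var}_{i\sim p}[v_i]\le \E_{i\sim p}[v_i^2]=\sum_i p_i v_i^2\le \max_i v_i^2\le \|v\|_2^2=1 .
\]
Hence $\|C_p\|_2\le 1$. (One could also bound the largest eigenvalue by the trace, giving $1-\|p\|_2^2\le 1$, or apply Gershgorin's theorem; either yields the same conclusion. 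The variance bound is stated explicitly because the later Fisher bounds reuse it.)

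No step is a real obstacle. The only care required is to invoke symmetry and positive semidefiniteness before identifying the spectral norm with the maximum of the Rayleigh quotient; without them the Rayleigh quotient need not equal the operator norm.
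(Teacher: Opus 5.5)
Your proof is correct and follows essentially the same route as the paper: identify $v^\top C_p v$ with the variance of $v_i$ under $i\sim p$ (the paper phrases this as the covariance of the one-hot vector $e_i$ drawn with probability $p_i$), bound the variance by the second moment $\sum_i p_i v_i^2\le\|v\|_\infty^2\le 1$, and compute $C_p\mathbf{1}$ and $\tr(C_p)$ directly. Your side remarks also give valid alternatives for the norm bound: for a PSD matrix $\lambda_{\max}\le\tr(C_p)=1-\|p\|_2^2$, and Gershgorin gives the sharper bound $\lambda_{\max}\le 2\max_i p_i(1-p_i)\le\tfrac12$.
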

\begin{proof}
$C_p$ is the covariance of a categorical random variable $X$ taking value $e_i$ with probability $p_i$, hence PSD. The identity $C_p\mathbf{1} = p - p(\mathbf{1}^\top p) = p-p = 0$ follows from $\mathbf{1}^\top p = 1$. For any unit $v$, $v^\top C_p v = \mathrm{Var}(v^\top X) \leq \mathbb{E}[(v^\top X)^2] = \sum_i p_i v_i^2 \leq \|v\|_\infty^2 \leq 1$, so $\lambda_{\max}(C_p)\leq 1$. Finally $\tr(C_p) = \tr(\diag(p)) - \tr(pp^\top) = 1 - \|p\|_2^2$.
\end{proof}
 
\begin{lemma}[Softmax Lipschitz]
\label{lem:softmax-lip}
For any $z, z'\in\R^V$, $\|\softmax(z') - \softmax(z)\|_1 \leq \|z'-z\|_2 \cdot \sqrt{V}$ and $\|\softmax(z')-\softmax(z)\|_2 \leq \|z'-z\|_2$.
\end{lemma}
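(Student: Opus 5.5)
The $\ell_2$ bound comes from the mean value theorem applied to the softmax map, and the $\ell_1$ bound is then a norm comparison.

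\emph{Setup.} Let $z_s = z + s(z'-z)$ for $s\in[0,1]$ and $\Delta = z'-z$. By Lemma~\ref{lem:logpart}, $\softmax = \nabla A$ and its Jacobian is $\nabla^2 A(z_s) = C_{\softmax(z_s)}$. The fundamental theorem of calculus along the segment gives
\begin{equation*}
\softmax(z')-\softmax(z) = \int_0^1 C_{\softmax(z_s)}\,\Delta\,ds .
\end{equation*}

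\emph{The $\ell_2$ bound.} Take Euclidean norms inside the integral. By Lemma~\ref{lem:sigma}, $\|C_p\|_2\le 1$ for every probability vector $p$, so
\begin{equation*}
\|\softmax(z')-\softmax(z)\|_2 \le \int_0^1 \|C_{\softmax(z_s)}\|_2\,\|\Delta\|_2\,ds \le \|\Delta\|_2 .
\end{equation*}

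\emph{The $\ell_1$ bound.} Use the comparison $\|w\|_1 \le \sqrt V\,\|w\|_2$ for $w\in\R^V$, which follows from Cauchy--Schwarz against the all-ones vector. Applied to $w=\softmax(z')-\softmax(z)$ together with the $\ell_2$ bound, this yields $\|\softmax(z')-\softmax(z)\|_1 \le \sqrt V\,\|z'-z\|_2$.

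\emph{Possible obstacle and remark.} The only place needing justification is the integral form of the mean value theorem for a vector-valued map. It is valid here because $\softmax$ is smooth. One should avoid the pointwise mean value theorem, which does not hold for vector-valued functions.

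The $\sqrt V$ factor in the $\ell_1$ statement is looser than necessary. A sharper constant, about $2$, is available by bounding $\|C_p\Delta\|_1 \le 2\|\Delta\|_\infty$. The stated form suffices and matches the constant $C_2=\sqrt V\,\sigma_{\max}(W)^3$ in Theorem~\ref{thm:surrogate-equiv}.
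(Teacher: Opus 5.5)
Your proof is correct and is the same as the paper's: you integrate the Jacobian $C_{\softmax(z_s)}$ along the segment, bound it using $\|C_p\|_2\le 1$ from Lemma~\ref{lem:sigma}, and obtain the $\ell_1$ bound from $\|w\|_1\le\sqrt V\,\|w\|_2$. Your side remark that $\|C_p\Delta\|_1\le 2\|\Delta\|_\infty$ gives a dimension-free constant is also valid.
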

\begin{proof}
Define $\gamma(t)=\softmax(z+t(z'-z))$. Then $\gamma'(t) = C_{\gamma(t)}(z'-z)$, so
\[
\softmax(z')-\softmax(z) \;=\; \int_0^1 C_{\gamma(t)}(z'-z)\,dt.
\]
Taking $\ell_2$ norms and using $\|C_{\gamma(t)}\|_2\leq 1$ (Lemma~\ref{lem:sigma}) gives the $\ell_2$ bound. For $\ell_1$, use $\|v\|_1 \leq \sqrt{V}\|v\|_2$.
\end{proof}
 
\begin{lemma}[Covariance-map Lipschitz]
\label{lem:cov-lip}
For any probability vectors $p,q\in\R^V$, $\|C_p - C_q\|_2 \leq 3\|p-q\|_2$.
\end{lemma}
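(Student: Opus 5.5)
Write $C_p-C_q=\diag(p-q)-(pp^\top-qq^\top)$ and bound the two pieces separately in spectral norm, using the triangle inequality $\|C_p-C_q\|_2\le\|\diag(p-q)\|_2+\|pp^\top-qq^\top\|_2$.

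\emph{Diagonal piece.} The spectral norm of a diagonal matrix is its largest absolute entry, so $\|\diag(p-q)\|_2=\|p-q\|_\infty\le\|p-q\|_2$.

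\emph{Rank-two piece.} Use the standard splitting
\[
pp^\top-qq^\top=(p-q)p^\top+q(p-q)^\top .
\]
For vectors $a,b$ we have $\|ab^\top\|_2=\|a\|_2\|b\|_2$, so
\[
\|pp^\top-qq^\top\|_2\le\|p-q\|_2\,(\|p\|_2+\|q\|_2).
\]
Since $p$ and $q$ are probability vectors, $\|p\|_2\le\|p\|_1=1$ and likewise $\|q\|_2\le1$. This gives $\|pp^\top-qq^\top\|_2\le 2\|p-q\|_2$.

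\emph{Combining.} Adding the two bounds yields $\|C_p-C_q\|_2\le 3\|p-q\|_2$, which is the claim. The argument is elementary and none of the steps is a real obstacle. The only point that needs care is the estimate $\|p\|_2\le1$ for probability vectors, which is what makes the constant independent of $V$. A sharper constant is available, but $3$ is all the statement requires.
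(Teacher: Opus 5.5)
Your proof is correct and follows essentially the same route as the paper. Both split $C_p-C_q$ into $\diag(p-q)$, bounded by $\|p-q\|_\infty\le\|p-q\|_2$, and the rank-two piece $(p-q)p^\top+q(p-q)^\top$, bounded by $2\|p-q\|_2$ using $\|p\|_2,\|q\|_2\le 1$.
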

\begin{proof}
$C_p - C_q = \diag(p-q) - (pp^\top - qq^\top)$. For the diagonal piece, $\|\diag(p-q)\|_2 = \|p-q\|_\infty \leq \|p-q\|_2$. For the rank-2 piece, write $pp^\top - qq^\top = (p-q)p^\top + q(p-q)^\top$, so
\[
\|pp^\top - qq^\top\|_2 \;\leq\; \|p-q\|_2(\|p\|_2 + \|q\|_2) \;\leq\; 2\|p-q\|_2,
\]
using $\|p\|_2,\|q\|_2 \leq 1$. Combining yields $\|C_p-C_q\|_2 \leq 3\|p-q\|_2$.
\end{proof}

\begin{lemma}[Matrix-valued distance bound]
\label{lem:tv-matrix}
Let $\mu_1, \mu_2$ be probability measures on a finite (or countable) set $\mathcal{Y}$, and let $M : \mathcal{Y} \to \R^{n_1\times n_2}$ satisfy $\|M(y)\|_2 \leq B$ for all $y \in \mathcal{Y}$. We have
\[
\big\|\E_{\mu_1}[M] - \E_{\mu_2}[M]\big\|_2 \;\leq\; B\,\|\mu_1-\mu_2\|_{1}.
\]
\end{lemma}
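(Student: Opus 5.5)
The plan is to write the difference of expectations as a single weighted sum over $\mathcal Y$ and then apply the triangle inequality for the operator norm. Because $\mathcal{Y}$ is finite or countable, we have
\[
\E_{\mu_1}[M]-\E_{\mu_2}[M] \;=\; \sum_{y\in\mathcal Y}\bigl(\mu_1(y)-\mu_2(y)\bigr)M(y).
\]
In the countable case this series converges absolutely in operator norm, since $\sum_y |\mu_1(y)-\mu_2(y)|\,\|M(y)\|_2 \le 2B<\infty$. Absolute convergence is what allows us to rearrange the terms and to pass the norm inside the sum.

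Next we apply the triangle inequality (subadditivity of $\|\cdot\|_2$, which extends to absolutely convergent series by continuity) together with homogeneity. This gives
\[
\Bigl\|\sum_y(\mu_1(y)-\mu_2(y))M(y)\Bigr\|_2 \le \sum_y |\mu_1(y)-\mu_2(y)|\,\|M(y)\|_2 \le B\sum_y|\mu_1(y)-\mu_2(y)|.
\]
The final sum is exactly $\|\mu_1-\mu_2\|_1$, which is the claimed bound.

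Almost nothing here is an obstacle. The only point that needs care is the justification in the countable case that the expectations exist and that the norm commutes with the limit of partial sums. Both facts follow from the uniform bound $B$ together with the fact that $\mu_1$ and $\mu_2$ are probability measures.

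We also note a sharper statement that is not needed: subtracting any constant matrix $M_0$ from $M$ leaves the difference of expectations unchanged. Taking $M_0$ to be a "center" of the range of $M$ improves the constant to $B\cdot\tfrac12\|\mu_1-\mu_2\|_1$ when $\|M(y)-M_0\|_2\le B/2$. The stated bound requires only the direct argument above.
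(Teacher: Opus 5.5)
Your proof is correct and follows the paper's argument: write the difference as $\sum_{y}(\mu_1(y)-\mu_2(y))M(y)$ and apply the triangle inequality for the spectral norm together with the uniform bound $B$. Your extra justification of absolute convergence in the countable case is a detail the paper leaves implicit.
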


\begin{proof}
Set $\delta(y) := \mu_1(y) - \mu_2(y)$, so that $\E_{\mu_1}[M] - \E_{\mu_2}[M] = \sum_{y\in\mathcal{Y}} M(y)\,\delta(y)$. Applying the triangle inequality on the spectral norm,
\[
\Big\|\sum_{y\in\mathcal{Y}} M(y)\,\delta(y)\Big\|_2 \;\leq\; \sum_{y\in\mathcal{Y}} \|M(y)\|_2\,|\delta(y)| \;\leq\; B\sum_{y\in\mathcal{Y}}|\delta(y)| \;=\; B\,\|\mu_1-\mu_2\|_1.
\]
\end{proof}
 
\subsection{Proof of the autoregressive chain rule}
\label{app:chain-rule}
 
\begin{proposition}[Eq.~\eqref{eq:kl-effort}]
For any two autoregressive policies $\pi_u, \pi_{\text{ref}}$ over responses $y = (y_1,\dots,y_T)$ given prompt $x$,
\begin{equation}
\label{eq:chain-rule-appendix}
\KL\!\big(\pi_u(\cdot\mid x)\,\|\,\pi_{\text{ref}}(\cdot\mid x)\big)
\;=\; \sum_{t=1}^T \E_{y_{<t}\sim\pi_u}\!\Big[\KL\!\big(\pi_u(\cdot\mid x,y_{<t})\,\|\,\pi_{\text{ref}}(\cdot\mid x,y_{<t})\big)\Big].
\end{equation}
\end{proposition}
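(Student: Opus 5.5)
The plan is to write the sequence-level log-ratio as a sum of per-token log-ratios and apply linearity of expectation together with the tower property over prefixes. Because both policies factor autoregressively,
\[
\log\frac{\pi_u(y\mid x)}{\pi_{\text{ref}}(y\mid x)}=\sum_{t=1}^T \log\frac{\pi_u(y_t\mid x,y_{<t})}{\pi_{\text{ref}}(y_t\mid x,y_{<t})},
\]
and the left-hand side of~\eqref{eq:chain-rule-appendix} is the expectation of this quantity under $y\sim\pi_u(\cdot\mid x)$. Since the response space is finite ($V^T$ sequences), we may exchange the expectation and the finite sum freely. This yields a sum over $t$ of the terms $\E_{y\sim\pi_u}[\log(\pi_u(y_t\mid\cdot)/\pi_{\text{ref}}(y_t\mid\cdot))]$.

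For a fixed $t$, the $t$-th summand depends only on $y_{\le t}=(y_{<t},y_t)$. We therefore marginalize out $y_{>t}$: the conditional probabilities $\pi_u(y_\tau\mid x,y_{<\tau})$ for $\tau>t$ sum to one when summed over the suffix, taking the innermost sum first. The joint law of $y_{\le t}$ is then $\pi_u(y_{<t}\mid x)\,\pi_u(y_t\mid x,y_{<t})$. Conditioning on $y_{<t}$, the inner expectation over $y_t\sim\pi_u(\cdot\mid x,y_{<t})$ of the token log-ratio is exactly $\KL(\pi_u(\cdot\mid x,y_{<t})\|\pi_{\text{ref}}(\cdot\mid x,y_{<t}))$. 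The outer expectation over $y_{<t}\sim\pi_u$ then gives the $t$-th term of the right-hand side, and summing over $t$ completes the argument.

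The only delicate point is well-definedness, namely avoiding $\log 0$ or infinite values. In our setting, both policies are softmaxes of finite logits and so have full support, which makes every log-ratio finite and every expectation a finite sum. If one wanted a statement for general autoregressive policies, the same computation goes through under absolute continuity, $\pi_u\ll\pi_{\text{ref}}$, with the convention $0\log 0=0$. If absolute continuity fails, both sides are $+\infty$. I expect this support bookkeeping to be the only obstacle; the rest is the marginalization step described above.
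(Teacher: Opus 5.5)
Your proposal is correct and takes essentially the same route as the paper. Both arguments factor the sequence log-ratio into per-token log-ratios, exchange the finite sums, marginalize out $y_{>t}$ using $\pi_u(y_{1:t}\mid x)=\pi_u(y_{<t}\mid x)\,\pi_u(y_t\mid x,y_{<t})$, and identify the inner expectation over $y_t$ as the token-level KL; your well-definedness remark (full support of softmax policies, or absolute continuity in general) is a small addition that the paper leaves implicit.
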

 
\begin{proof}
By the autoregressive factorization, $\log \pi_u(y\mid x) = \sum_{t=1}^T \log \pi_u(y_t\mid x, y_{<t})$ and similarly for $\pi_{\text{ref}}$. Therefore
\begin{align}
\KL(\pi_u(\cdot\mid x)\,\|\,\pi_{\text{ref}}(\cdot\mid x))
&= \sum_{y} \pi_u(y\mid x)\log \frac{\pi_u(y\mid x)}{\pi_{\text{ref}}(y\mid x)}
= \sum_{y}\pi_u(y\mid x)\sum_{t=1}^T \log\frac{\pi_u(y_t\mid x,y_{<t})}{\pi_{\text{ref}}(y_t\mid x,y_{<t})} \nonumber\\
&= \sum_{t=1}^T \sum_y \pi_u(y\mid x)\log\frac{\pi_u(y_t\mid x,y_{<t})}{\pi_{\text{ref}}(y_t\mid x,y_{<t})}.
\end{align}
The $t$-th summand depends on $y$ only through $y_{1:t}$. Marginalizing $y_{t+1:T}$ and using $\pi_u(y_{1:t}\mid x) = \pi_u(y_{<t}\mid x)\,\pi_u(y_t\mid x,y_{<t})$,
\begin{align*}
\sum_y \pi_u(y\mid x)\log\frac{\pi_u(y_t\mid x,y_{<t})}{\pi_{\text{ref}}(y_t\mid x,y_{<t})}
&= \sum_{y_{<t}}\pi_u(y_{<t})\sum_{y_t}\pi_u(y_t\mid y_{<t})\log\tfrac{\pi_u(y_t\mid y_{<t})}{\pi_{\text{ref}}(y_t\mid y_{<t})}\\
&= \E_{y_{<t}\sim\pi_u}\!\big[\KL(\pi_u(\cdot\mid y_{<t})\,\|\,\pi_{\text{ref}}(\cdot\mid y_{<t}))\big],
\end{align*}
where we have suppressed the conditioning on $x$ for brevity.
Summing over $t$ yields~\eqref{eq:chain-rule-appendix}.
\end{proof}

\subsection{Proof of Proposition~\ref{prop:kl-quad}}
\label{app:kl-quad-proof}

Fix two steering vectors $u, v$ at position $t$ (suppressing throughout this proof both the position subscript on the steering vectors and the position superscript on the per-position distributions, so we write $u, v$ for $u_t, v_t$ and $\pi_u, \pi_v$ for $\pi_u^t, \pi_v^t$), let $z_u, z_v$ denote the corresponding pre-softmax logits, and define $\delta := z_u - z_v = W(u-v)$. Consider the affine path
\[
z_\tau \;:=\; z_v + \tau\delta, \qquad \pi_\tau \;:=\; \softmax(z_\tau), \qquad \tau \in [0,1],
\]
so that $\pi_0 = \pi_v$ and $\pi_1 = \pi_u$. Define
\[
g(\tau) \;:=\; \KL(\pi_\tau \,\|\, \pi_v),
\]
which satisfies $g(0) = 0$ and $g(1) = \KL(\pi_u\|\pi_v)$. 
Using $\log\pi_\tau(i) - \log\pi_v(i) = \tau\,\delta_i - [A(z_\tau) - A(z_v)]$,
\begin{equation}
\label{eq:g-explicit}
g(\tau) \;=\; \tau\,\langle \pi_\tau, \delta\rangle - A(z_\tau) + A(z_v).
\end{equation}
Differentiating, with $\nabla A(z_\tau) = \pi_\tau$ and $\tfrac{d\pi_\tau}{d\tau} = C_{\pi_\tau}\delta$ (Lemma~\ref{lem:logpart}),
\[
g'(\tau) \;=\; \langle \pi_\tau, \delta\rangle + \tau\,\delta^\top C_{\pi_\tau}\delta - \langle \pi_\tau, \delta\rangle \;=\; \tau\,\delta^\top C_{\pi_\tau}\,\delta.
\]
By the fundamental theorem of calculus,
\begin{equation}
\label{eq:kl-integral}
\KL(\pi_u\,\|\,\pi_v) \;=\; g(1) - g(0) \;=\; \int_0^1 \tau\,\delta^\top C_{\pi_\tau}\,\delta\, d\tau.
\end{equation}

Write $C_{\pi_\tau} = C_{\pi_v} + (C_{\pi_\tau} - C_{\pi_v})$. The first part gives the leading term:
\[
\int_0^1 \tau\,\delta^\top C_{\pi_v}\,\delta\, d\tau \;=\; \tfrac{1}{2}\delta^\top C_{\pi_v}\,\delta \;=\; \tfrac{1}{2}(u-v)^\top F_t(v)\,(u-v),
\]
using $\delta = W(u-v)$ and $F_t(v) = W^\top C_{\pi_v} W$. The second gives a remainder $R$ with
\[
|R| \;=\; \Big|\int_0^1 \tau\,\delta^\top (C_{\pi_\tau} - C_{\pi_v})\,\delta\, d\tau\Big| \;\leq\; \|\delta\|_2^2 \int_0^1 \tau\,\|C_{\pi_\tau} - C_{\pi_v}\|_2\, d\tau.
\]
By Lemma~\ref{lem:cov-lip}, $\|C_{\pi_\tau} - C_{\pi_v}\|_2 \leq 3\|\pi_\tau - \pi_v\|_2 \leq 3\tau\|\delta\|_2$ (Lemma~\ref{lem:softmax-lip}). Hence
\[
|R| \;\leq\; 3\|\delta\|_2^3 \int_0^1 \tau^2\, d\tau \;=\; \|\delta\|_2^3 \;\leq\; \sigma_{\max}(W)^3\,\|u-v\|_2^3.
\]
Thus, for arbitrary steering vectors $u, v$,
\[
\KL(\pi_u\,\|\,\pi_v) \;=\; \tfrac{1}{2}(u-v)^\top F_t(v)\,(u-v) \;+\; O(\|u-v\|^3).
\]
Specializing to $v = 0$ yields $\pi_v = \pi_{\text{ref}}$ and $F_t(0) = W^\top C_{\pi_{\text{ref}}} W$, proving Proposition~\ref{prop:kl-quad}.

\subsection{Proof of the non-asymptotic KL bound}
\label{app:kl-bound}
 
We establish the global upper bound used in Section~\ref{sec:kl-fisher}.
 
\begin{proposition}
For any $u_t\in\R^d$,
\begin{equation}
\label{eq:kl-global-bound}
\KL\!\big(\pi_u(\cdot\mid x,y_{<t})\,\|\,\pi_{\text{ref}}(\cdot\mid x,y_{<t})\big)
\;\leq\; \tfrac{1}{2}\|W u_t\|_2^2
\;\leq\; \tfrac{1}{2}\sigma_{\max}(W)^2\,\|u_t\|_2^2.
\end{equation}
\end{proposition}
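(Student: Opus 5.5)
The plan is to reuse the exact integral representation \eqref{eq:kl-integral} from the proof of Proposition~\ref{prop:kl-quad}. Take $v=0$ and $\delta=Wu_t$, write $\pi_\tau=\softmax(z+\tau\delta)$ with $z=Wh_t+b$, and recall that
\[
\KL(\pi_u^t\,\|\,\pi_{\text{ref}}^t)=\int_0^1 \tau\,\delta^\top C_{\pi_\tau}\,\delta\,d\tau .
\]
This identity is exact for every $u_t$; its derivation only used the log-partition identities of Lemma~\ref{lem:logpart}. No smallness assumption is needed, so the global bound should follow by controlling the integrand uniformly in $\tau$.

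The key step is the uniform bound $\delta^\top C_{p}\,\delta\le\|\delta\|_2^2$ for any probability vector $p$, which follows from $\|C_p\|_2\le 1$ in Lemma~\ref{lem:sigma}. Applying it with $p=\pi_\tau$ for each $\tau\in[0,1]$ gives
\[
\KL(\pi_u^t\,\|\,\pi_{\text{ref}}^t)\le \|\delta\|_2^2\int_0^1\tau\,d\tau=\tfrac12\|Wu_t\|_2^2 .
\]
The second inequality in \eqref{eq:kl-global-bound} is then immediate from $\|Wu_t\|_2\le\sigma_{\max}(W)\|u_t\|_2$.

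There is no real obstacle. The only point to check is that \eqref{eq:kl-integral} was established for arbitrary $u,v$ rather than only asymptotically, and it was: the $O(\cdot)$ only entered through the remainder estimate, not the identity itself. A sharper alternative is Popoviciu's bound, $\operatorname{Var}\le \tfrac14(\max_i\delta_i-\min_i\delta_i)^2$, but it is not needed here.

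The same bound also covers the Fisher quadratic mentioned in the main text. By \eqref{eq:fisher-variance}, $\tfrac12 u_t^\top F_t(0)u_t=\tfrac12\operatorname{Var}_{\pi_{\text{ref}}^t}[\delta_i]\le\tfrac12\E[\delta_i^2]\le\tfrac12\|\delta\|_\infty^2\le\tfrac12\|\delta\|_2^2$, and this holds for the same reason.
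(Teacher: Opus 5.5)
Your proposal is correct and matches the paper's proof essentially step for step. You apply the exact identity \eqref{eq:kl-integral} with $v=0$, bound the integrand by $\|\delta\|_2^2$ using $C_{\pi_\tau}\preceq I_V$ from Lemma~\ref{lem:sigma}, integrate to get $\tfrac12\|Wu_t\|_2^2$, and conclude with $\|Wu_t\|_2\le\sigma_{\max}(W)\|u_t\|_2$; your side remarks (Popoviciu's sharper constant and the same bound for the Fisher quadratic) are also correct, the latter mirroring the paper's remark after the proof.
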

 
\begin{proof}
By~\eqref{eq:kl-integral}, $\KL(\pi_u^t\,\|\,\pi_{\text{ref}}^t) = \int_0^1 \tau\,\delta^\top C_{\pi_\tau}\delta\, d\tau$. 
Since $C_{\pi_\tau} \preceq I_V$ (Lemma~\ref{lem:sigma}), $\delta^\top C_{\pi_\tau}\delta \leq \|\delta\|_2^2$. Therefore
\[
\KL(\pi_u^t\,\|\,\pi_{\text{ref}}^t) \;\leq\; \|\delta\|_2^2 \int_0^1 \tau\,d\tau \;=\; \tfrac{1}{2}\|\delta\|_2^2,
\]
and $\|\delta\|_2 = \|Wu_t\|_2 \leq \sigma_{\max}(W)\|u_t\|_2$.
\end{proof}
 
\textbf{Relation to the Fisher quadratic.} Both $\KL(\pi_u^t\,\|\,\pi_{\text{ref}}^t)$ and $\tfrac{1}{2}u_t^\top F_t(0)u_t$ are bounded above by $\tfrac{1}{2}\|\delta\|_2^2$, since $u_t^\top F_t(0) u_t = \delta^\top C_{\pi_{\text{ref}}^t}\delta \leq \|\delta\|_2^2$. The Fisher quadratic is a \textit{local} second-order proxy for the KL (Appendix~\ref{app:kl-quad-proof}) but does not dominate it globally; the simpler $\tfrac{1}{2}\|Wu_t\|^2$ upper bound in~\eqref{eq:kl-global-bound} is the global surrogate.

\subsection{Proof of Theorem \ref{thm:seqkl-grad}}
\label{appendix:seqkl-grad}

\begin{proof}

    Let $\ell(y) := \sum_{\tau=1}^T \log\tfrac{\pi_u(y_\tau\mid y_{<\tau})}{\pi_{\text{ref}}(y_\tau \mid y_{<\tau})}$ and $\mathcal{K}(u) := \E_{y\sim\pi_u}[\ell(y)]$. We derive~\eqref{eq:seqkl-grad} by combining the autoregressive chain rule with direct differentiation of the per-token KL.
 
\textbf{Step 1: autoregressive decomposition.}
By the chain rule (Appendix~\ref{app:chain-rule}),
\begin{align}
\label{eq:K-chain}
\mathcal{K}(u) \;=\;& \sum_{\tau=1}^T \E_{y_{<\tau}\sim\pi_u}\!\big[\mathcal{K}_\tau(u_\tau; y_{<\tau})\big], \\
\mathcal{K}_\tau(u_\tau; y_{<\tau}) \;:=\;& \KL\!\big(\pi_u(\cdot\mid y_{<\tau})\,\|\,\pi_{\text{ref}}(\cdot\mid y_{<\tau})\big). \nonumber
\end{align}
The per-token KL $\mathcal{K}_\tau(u_\tau; y_{<\tau})$ depends on $u_t$ only when $\tau=t$ (through the steering vector $u_t$ itself); for $\tau\neq t$, $\mathcal{K}_\tau$ depends on $u_\tau$ and the context, not on $u_t$. The outer expectation, however, is taken under $\pi_u$, which depends on the entire prefix $u_{<\tau}$.
 
\textbf{Step 2: differentiating~\eqref{eq:K-chain} in $u_t$.}
Using the log-derivative identity $\nabla_{u_t}\pi_u(y_{<\tau}\mid x) = \pi_u(y_{<\tau}\mid x)\,\nabla_{u_t}\log\pi_u(y_{<\tau}\mid x)$:
\begin{align}
\nabla_{u_t}\mathcal{K}(u)
\;=\;& \sum_{\tau=1}^T \E_{y_{<\tau}\sim\pi_u}\!\big[\nabla_{u_t}\mathcal{K}_\tau(u_\tau; y_{<\tau})\big] \nonumber\\
&+ \sum_{\tau=1}^T \E_{y_{<\tau}\sim\pi_u}\!\big[\big(\nabla_{u_t}\log\pi_u(y_{<\tau}\mid x)\big)\,\mathcal{K}_\tau(u_\tau; y_{<\tau})\big].
\label{eq:K-grad-two}
\end{align}
The first piece collects the \textit{explicit-dependence} contributions: by Step~1, only $\tau=t$ contributes, and for $\tau=t$ the inner gradient $\nabla_{u_t}\mathcal{K}_t$ is computed at fixed context (treating $y_{<t}$ as non-random). The second piece collects the \textit{trajectory} contributions from $\pi_u$ depending on $u_t$.
 
\textbf{Step 3: the explicit-dependence term equals $F_t(u_t)u_t$.}
At fixed $y_{<t}$, the closed form~\eqref{eq:per-token-kl-exact} reads $\mathcal{K}_t = A(z_t) - A(z_t+\delta_t) + \langle \pi_u^t,\delta_t\rangle$ with $\delta_t = Wu_t$. Differentiating using $\nabla A(z_t+\delta_t) = \pi_u^t$ and $\nabla_{\delta_t}\pi_u^t = C_{\pi_u^t}$ (Lemma~\ref{lem:logpart}),
\[
\nabla_{\delta_t}\mathcal{K}_t
\;=\; -\pi_u^t + \pi_u^t + C_{\pi_u^t}\,\delta_t
\;=\; C_{\pi_u^t}\,\delta_t,
\qquad
\nabla_{u_t}\mathcal{K}_t
\;=\; W^\top C_{\pi_u^t} W\, u_t
\;=\; F_t(u_t)\, u_t,
\]
where the cancellation in the first equation comes from the product rule on $\langle\pi_u^t,\delta_t\rangle$ and the identification $F_t(u_t) := W^\top C_{\pi_u^t} W$ matches~\eqref{eq:fisher-u}. The relation
\begin{equation}
\label{eq:inner-grad-Fu}
\nabla_{u_t}\mathcal{K}_t(u_t; y_{<t}) \;=\; F_t(u_t)\, u_t
\end{equation}
holds \textit{exactly} (no truncation error), since the per-position KL is a Bregman divergence whose gradient is the Hessian of $A$ at $z+\delta$ applied to $\delta$. Taking expectation over $y_{<t}\sim\pi_u$ gives the analytic term in~\eqref{eq:seqkl-grad}.
 
\textbf{Step 4: the trajectory term is the suffix score-function term.}
We show that the second line of~\eqref{eq:K-grad-two}, which we abbreviate as
\begin{equation}
\label{eq:traj-term}
\mathcal{T}(u) \;:=\; \sum_{\tau=1}^T \E_{y_{<\tau}\sim\pi_u(\cdot\mid x)}\!\Big[\big(\nabla_{u_t}\log\pi_u(y_{<\tau}\mid x)\big)\,\mathcal{K}_\tau(u_\tau; y_{<\tau})\Big],
\end{equation}
equals the suffix score-function term $\E_{y\sim\pi_u(\cdot\mid x)}\!\big[\big(\nabla_{u_t}\log\pi_u(y_t\mid x, y_{<t})\big)\,\ell_{>t}(y)\big]$ in~\eqref{eq:seqkl-grad}. The argument has three parts: (a) the trajectory score collapses to a single position-$t$ contribution that vanishes for $\tau \le t$; (b) the score factors out of the inner conditional expectation; and (c) the remaining inner sum is exactly the conditional sequence-KL of the suffix.

\textbf{(a) Reducing the trajectory score to position $t$.}
By the autoregressive factorization of $\pi_u(\cdot\mid x)$,
\[
\log\pi_u(y_{<\tau}\mid x) \;=\; \sum_{t'<\tau}\log\pi_u(y_{t'}\mid x, y_{<t'}),
\]
hence
\[
\nabla_{u_t}\log\pi_u(y_{<\tau}\mid x) \;=\; \sum_{t'<\tau}\nabla_{u_t}\log\pi_u(y_{t'}\mid x, y_{<t'}).
\]
Each per-token log-probability admits the explicit form
\[
\log\pi_u(y_{t'}\mid x, y_{<t'}) \;=\; \big(W(h_{t'}+u_{t'})+b\big)_{y_{t'}} \;-\; A\!\big(W(h_{t'}+u_{t'})+b\big),
\]
where $h_{t'} = h_{t'}(x, y_{<t'})$ is the frozen base-model hidden state. Because steering is applied only at the pre-logit stage, $h_{t'}$ does not depend on $u$; therefore $\log\pi_u(y_{t'}\mid x, y_{<t'})$ depends on $u_t$ if and only if the term $u_{t'}$ appearing inside $W(h_{t'}+u_{t'})$ is $u_t$, i.e., if and only if $t' = t$. Consequently
\[
\nabla_{u_t}\log\pi_u(y_{t'}\mid x, y_{<t'}) \;=\; \begin{cases} \nabla_{u_t}\log\pi_u(y_t\mid x, y_{<t}), & t' = t, \\[2pt] 0, & t' \neq t, \end{cases}
\]
and substituting into the expansion above,
\begin{equation}
\label{eq:score-collapse}
\nabla_{u_t}\log\pi_u(y_{<\tau}\mid x) \;=\; \nabla_{u_t}\log\pi_u(y_t\mid x, y_{<t})\cdot \mathbf{1}[\tau > t].
\end{equation}
Every summand in~\eqref{eq:traj-term} with $\tau \le t$ vanishes, and the trajectory term reduces to
\begin{equation}
\label{eq:traj-restricted}
\mathcal{T}(u) \;=\; \sum_{\tau>t} \E_{y_{<\tau}\sim\pi_u(\cdot\mid x)}\!\Big[\big(\nabla_{u_t}\log\pi_u(y_t\mid x, y_{<t})\big)\,\mathcal{K}_\tau(u_\tau; y_{<\tau})\Big].
\end{equation}

\textbf{(b) Factoring the score out of the inner expectation.}
The score $\nabla_{u_t}\log\pi_u(y_t\mid x, y_{<t})$ is a function of $(x, y_{\le t})$ alone. We separate the $\tau = t+1$ summand of~\eqref{eq:traj-restricted}, for which $y_{<\tau} = y_{\le t}$ and no further integration over the suffix is required, from the remaining summands with $\tau \ge t+2$, for which $y_{<\tau} = (y_{\le t}, y_{t+1:\tau-1})$ contains nontrivial random suffix tokens $y_{t+1:\tau-1}$:
\begin{align*}
\mathcal{T}(u) \;=\;& \E_{y_{\le t}\sim\pi_u(\cdot\mid x)}\!\Big[\big(\nabla_{u_t}\log\pi_u(y_t\mid x, y_{<t})\big)\,\mathcal{K}_{t+1}(u_{t+1}; y_{\le t})\Big] \\
&+ \sum_{\tau = t+2}^{T} \E_{y_{<\tau}\sim\pi_u(\cdot\mid x)}\!\Big[\big(\nabla_{u_t}\log\pi_u(y_t\mid x, y_{<t})\big)\,\mathcal{K}_\tau(u_\tau; y_{<\tau})\Big].
\end{align*}
For each $\tau \ge t+2$, splitting the expectation over $y_{<\tau}$ under $\pi_u(\cdot\mid x)$ into an outer expectation over $y_{\le t}$ and an inner conditional expectation over $y_{t+1:\tau-1}$ given $y_{\le t}$, and using that the score is measurable with respect to $y_{\le t}$ to factor it out of the inner expectation,
\begin{align*}
\E_{y_{<\tau}\sim\pi_u(\cdot\mid x)}\!\Big[\big(\nabla_{u_t}\log\pi_u(y_t\mid x, y_{<t})\big)\,\mathcal{K}_\tau(u_\tau; y_{<\tau})\Big]
&\;=\;
\\
& \E_{y_{\le t}\sim\pi_u(\cdot\mid x)}\!\bigg[\big(\nabla_{u_t}\log\pi_u(y_t\mid x, y_{<t})\big)\\
&\times\, \E_{y_{t+1:\tau-1}\sim\pi_u(\cdot\mid x, y_{\le t})}\!\big[\mathcal{K}_\tau(u_\tau; y_{<\tau})\big]\bigg].
\end{align*}
Summing over $\tau \ge t+2$, combining with the $\tau = t+1$ term, and exchanging the finite sum with the outer expectation,
\begin{equation}
\label{eq:traj-after-tower}
\mathcal{T}(u) \;=\; \E_{y_{\le t}\sim\pi_u(\cdot\mid x)}\!\Big[\big(\nabla_{u_t}\log\pi_u(y_t\mid x, y_{<t})\big)\,S(y_{\le t}; u)\Big],
\end{equation}
where we have introduced the auxiliary quantity
\begin{equation}
\label{eq:S-def}
S(y_{\le t}; u) \;:=\; \mathcal{K}_{t+1}(u_{t+1}; y_{< t + 1}) \;+\; \sum_{\tau = t+2}^{T} \E_{y_{t+1:\tau-1}\sim\pi_u(\cdot\mid x, y_{\le t})}\!\big[\mathcal{K}_\tau(u_\tau; y_{<\tau})\big].
\end{equation}

\textbf{(c) Identifying $S(y_{\le t}; u)$ as the conditional suffix KL.}
Fix $y_{\le t}$ and apply the autoregressive chain rule of Appendix~\ref{app:chain-rule} to the conditional sequence laws $\pi_u(\cdot\mid x, y_{\le t})$ and $\pi_{\text{ref}}(\cdot\mid x, y_{\le t})$ over the suffix variable $y_{>t} = (y_{t+1}, \ldots, y_T)$. The first chain-rule term (corresponding to the absolute position $\tau = t+1$) is the per-position KL at position $t+1$ given $y_{\le t}$, with no further conditional averaging; the remaining terms ($\tau \ge t+2$) involve genuine conditional expectations over intermediate suffix tokens. We have
\begin{align}
    \begin{split}
        \label{eq:suffix-chain}
\KL\!\big(\pi_u(\cdot\mid x, y_{\le t})\,\|\,\pi_{\text{ref}}(\cdot\mid x, y_{\le t})\big)
&\!=\! \mathcal{K}_{t+1}(u_{t+1}; y_{< t + 1}) \\
&\quad + \sum_{\tau = t+2}^{T} \E_{y_{t+1:\tau-1}\sim\pi_u(\cdot\mid x, y_{\le t})}\!\big[\mathcal{K}_\tau(u_\tau; y_{<\tau})\big]
\;\\
&=\; S(y_{\le t}; u).
    \end{split}
\end{align}
On the other hand, expanding the same conditional KL directly as an expectation of a sum of log-ratios under $\pi_u(\cdot\mid x, y_{\le t})$,
\begin{align}
    \begin{split}
        \label{eq:suffix-as-expectation}
\KL\!\big(\pi_u(\cdot\mid x, y_{\le t})\,\|\,\pi_{\text{ref}}(\cdot\mid x, y_{\le t})\big)
\;&=\; \E_{y_{>t}\sim\pi_u(\cdot\mid x, y_{\le t})}\!\Big[\sum_{\tau > t}\log\tfrac{\pi_u(y_\tau\mid x, y_{<\tau})}{\pi_{\text{ref}}(y_\tau\mid x, y_{<\tau})}\Big]
\\
&=\; \E_{y_{>t}\sim\pi_u(\cdot\mid x, y_{\le t})}\!\big[\ell_{>t}(y)\big],
    \end{split}
\end{align}
where the last equality uses $\ell_{>t}(y) = \sum_{\tau > t}\log\tfrac{\pi_u(y_\tau\mid x, y_{<\tau})}{\pi_{\text{ref}}(y_\tau\mid x, y_{<\tau})}$ (and note that $\ell_{>t}(y)$ depends on $y$ only through its suffix $y_{>t}$ once $y_{\le t}$ is fixed). Combining~\eqref{eq:suffix-chain} and~\eqref{eq:suffix-as-expectation},
\begin{equation}
\label{eq:S-final}
S(y_{\le t}; u) \;=\; \E_{y_{>t}\sim\pi_u(\cdot\mid x, y_{\le t})}\!\big[\ell_{>t}(y)\big].
\end{equation}

\textbf{Combining the pieces.}
Substituting~\eqref{eq:S-final} into~\eqref{eq:traj-after-tower} and merging the outer expectation over $y_{\le t}\sim\pi_u(\cdot\mid x)$ with the inner expectation over $y_{>t}\mid y_{\le t}$ into a single expectation over $y\sim\pi_u(\cdot\mid x)$, we have
\begin{equation}
\label{eq:traj-final}
\mathcal{T}(u) \;=\; \E_{y\sim\pi_u(\cdot\mid x)}\!\Big[\big(\nabla_{u_t}\log\pi_u(y_t\mid x, y_{<t})\big)\,\ell_{>t}(y)\Big].
\end{equation}
This is the suffix score-function term in Theorem~\ref{thm:seqkl-grad}. Combining~\eqref{eq:traj-final} with the analytic term established in Step~3 yields~\eqref{eq:seqkl-grad} and completes the proof.
\end{proof}

\subsection{Proof of Theorem \ref{thm:surrogate-equiv}}
\label{app:proofOfSurrogateEquiv}

\begin{proof}
\textbf{Step 1: bounding $\|F_t(u_t) - F_t(0)\|_2$ pointwise in $y_{<t}$.}
By definition $F_t(u_t) - F_t(0) = W^\top(C_{\pi_u^t} - C_{\pi_{\text{ref}}^t})W$, so
\begin{equation}
\label{eq:F-diff-step1}
\|F_t(u_t)-F_t(0)\|_2 \;\leq\; \sigma_{\max}(W)^2\,\|C_{\pi_u^t}-C_{\pi_{\text{ref}}^t}\|_2 \;\leq\; 3 \sigma_{\max}(W)^2\,\|\pi_u^t-\pi_{\text{ref}}^t\|_2,
\end{equation}
using Lemma~\ref{lem:cov-lip}. By Lemma~\ref{lem:softmax-lip},
$\|\pi_u^t - \pi_{\text{ref}}^t\|_2 \leq \|\delta\|_2 = \|Wu_t\|_2 \leq \sigma_{\max}(W)\|u_t\|_2$. Substituting,
\begin{equation}
\label{eq:F-u-bound}
\|F_t(u_t) - F_t(0)\|_2 \;\leq\; 3 \sigma_{\max}(W)^3\,\|u_t\|_2 \;=:\; C_1\,\|u_t\|_2,
\end{equation}
with $C_1 = 3 \sigma_{\max}(W)^3$. This bound is uniform in $y_{<t}$.
 
\textbf{Step 2: bounding trajectory drift.}
Define the (random) matrix $M(y_{<t}) := F_t(0)$ as a function of the context $y_{<t}$ (through $h_t$). We bound
\begin{equation}
\label{eq:F-traj-diff}
\big\|\bar{F}_t^u - \bar{F}_t\big\|_2
\;=\; \big\|\E_{y_{<t}\sim\pi_u}[M] - \E_{y_{<t}\sim\pi_{\text{ref}}}[M]\big\|_2.
\end{equation}
We first bound $\|M(y_{<t})\|_2 \leq \sigma_{\max}(W)^2$ uniformly, since $\|C_p\|_2\leq 1$ (Lemma~\ref{lem:sigma}). 
Next, by Lemma \ref{lem:tv-matrix}, we have
\begin{equation}
\label{eq:TV-bound}
\big\|\E_{\mu_1}[M] - \E_{\mu_2}[M]\big\|_2 \;\leq\; \sigma_{\max}(W)^2\,\|\mu_1 - \mu_2\|_1.
\end{equation}

It remains to bound $\|\pi_u(\cdot\mid x) - \pi_{\text{ref}}(\cdot\mid x)\|_{1}$ restricted to the first $t-1$ coordinates (the trajectory law governing $y_{<t}$). We use the following sequential coupling argument.
 
\begin{lemma}
\label{lem:seq-tv}
The trajectory laws of $\pi_u$ and $\pi_{\text{ref}}$ restricted to $y_{<t}$ satisfy
\[
\big\|\pi_u(y_{<t}\mid x) - \pi_{\text{ref}}(y_{<t}\mid x)\big\|_1
\;\leq\; L_{\mathrm{traj}}\sum_{\tau<t}\|u_\tau\|_2,
\qquad L_{\mathrm{traj}} = \sqrt{V}\,\sigma_{\max}(W).
\]
\end{lemma}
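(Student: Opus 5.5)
We prove the bound by a telescoping (hybrid) argument over positions, comparing the two prefix laws one token at a time. For $k=0,1,\dots,t-1$, let $\mu_k$ denote the law on $y_{<t}$ that draws $y_1,\dots,y_k$ from the steered conditionals $\pi_u(\cdot\mid x,y_{<\tau})$ and $y_{k+1},\dots,y_{t-1}$ from the reference conditionals $\pi_{\text{ref}}(\cdot\mid x,y_{<\tau})$. Then $\mu_0=\pi_{\text{ref}}(y_{<t}\mid x)$ and $\mu_{t-1}=\pi_u(y_{<t}\mid x)$. By the triangle inequality,
\[
\big\|\pi_u(y_{<t}\mid x)-\pi_{\text{ref}}(y_{<t}\mid x)\big\|_1\le\sum_{k=1}^{t-1}\|\mu_k-\mu_{k-1}\|_1 .
\]

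Next we bound each hybrid difference. The laws $\mu_k$ and $\mu_{k-1}$ share the first $k-1$ conditionals (all steered) and the conditionals after position $k$ (all reference). They differ only in the conditional used at position $k$. Write the common prefix law as $\rho(y_{<k})$ and the common suffix kernel as $Q(y_{k+1:t-1}\mid y_{\le k})$. Then
\[
\|\mu_k-\mu_{k-1}\|_1=\sum_{y_{<k}}\rho(y_{<k})\sum_{y_k}\big|\pi_u(y_k\mid x,y_{<k})-\pi_{\text{ref}}(y_k\mid x,y_{<k})\big|\sum_{y_{k+1:t-1}}Q(y_{k+1:t-1}\mid y_{\le k}).
\]
The inner sum over the suffix equals $1$, because $Q$ is a probability kernel. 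Hence $\|\mu_k-\mu_{k-1}\|_1\le\sup_{y_{<k}}\|\pi_u^k-\pi_{\text{ref}}^k\|_1$.

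It remains to bound the per-position difference. At a fixed prefix, the two logit vectors differ by $Wu_k$, since $h_k$ does not depend on $u$. Lemma~\ref{lem:softmax-lip} then gives
\[
\|\pi_u^k-\pi_{\text{ref}}^k\|_1\le\sqrt V\,\|Wu_k\|_2\le\sqrt V\,\sigma_{\max}(W)\,\|u_k\|_2 .
\]
This bound is uniform in the prefix. Summing over $k<t$ yields the claim with $L_{\mathrm{traj}}=\sqrt V\,\sigma_{\max}(W)$.

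The main obstacle is the bookkeeping in the hybrid step: we must check that the suffix marginalizes out exactly. This requires ordering the hybrids so that the reference conditionals come after the swapped position, and noting that those conditionals may depend on $y_k$, which is harmless because each still sums to one for every fixed $y_{\le k}$.

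Combining the lemma with \eqref{eq:TV-bound} gives the second bound in Theorem~\ref{thm:surrogate-equiv} with $C_2=\sigma_{\max}(W)^2L_{\mathrm{traj}}$. Step 1 gives the first bound after averaging over $y_{<t}\sim\pi_u$, using convexity of the spectral norm.
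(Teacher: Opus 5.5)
Your proposal is correct and follows essentially the same route as the paper: the same hybrid interpolation that swaps one conditional at a time, with steered factors before the swapped position and reference factors after it so the suffix kernel sums to one, followed by the uniform per-position bound $\sqrt{V}\,\sigma_{\max}(W)\|u_\tau\|_2$ from Lemma~\ref{lem:softmax-lip}. The only cosmetic difference is that you bound each hybrid gap by a supremum over prefixes, whereas the paper writes it as an expectation under the steered prefix law and then applies the same uniform bound.
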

 
\begin{proof}
The proof has two parts: we first establish a sequential decomposition of the $\ell_1$ distance for autoregressive laws, and then apply it to $\mu = \pi_u(\cdot\mid x)$ and $\nu = \pi_{\text{ref}}(\cdot\mid x)$ restricted to $y_{<t}$.

\textbf{Step A:} For any two autoregressive laws $\mu, \nu$ on $y_{<t} = (y_1, \dots, y_{t-1})$,
\begin{equation}
\label{eq:autoregr-tv}
\|\mu - \nu\|_{1}
\;\leq\; \sum_{\tau<t} \E_{y_{<\tau}\sim\mu}\!\Big[\|\mu(\cdot\mid y_{<\tau}) - \nu(\cdot\mid y_{<\tau})\|_{1}\Big].
\end{equation}
The idea is to interpolate from $\nu$ to $\mu$ by switching one conditional factor at a time. Define, for $0 \leq \tau \leq t-1$, the intermediate distribution
\[
P_\tau(y_{<t}) \;:=\; \prod_{r=1}^{\tau}\mu(y_r\mid y_{<r})\,\prod_{r=\tau+1}^{t-1}\nu(y_r\mid y_{<r}),
\]
with the convention that empty products equal $1$. Then $P_0 = \nu$, $P_{t-1} = \mu$, and
\[
\mu - \nu \;=\; P_{t-1} - P_0 \;=\; \sum_{\tau=1}^{t-1}\bigl(P_\tau - P_{\tau-1}\bigr).
\]
For each $1 \leq \tau \leq t-1$, only the $\tau$-th factor differs between $P_\tau$ and $P_{\tau-1}$, so
\[
P_\tau(y_{<t}) - P_{\tau-1}(y_{<t}) \;=\; \mu(y_{<\tau})\,\big[\mu(y_\tau\mid y_{<\tau}) - \nu(y_\tau\mid y_{<\tau})\big]\,\prod_{r=\tau+1}^{t-1}\nu(y_r\mid y_{<r}),
\]
where $\mu(y_{<\tau}) := \prod_{r=1}^{\tau-1}\mu(y_r\mid y_{<r})$ is the marginal of $\mu$ on the first $\tau-1$ coordinates. Taking absolute values and summing over $y_{<t}$, the leading $\mu$-product and trailing $\nu$-product are non-negative and the trailing one marginalizes to $1$ for any fixed $y_{<\tau+1}$ (since $\sum_{y_{\tau+1:t-1}}\prod_{r=\tau+1}^{t-1}\nu(y_r\mid y_{<r}) = 1$). Hence
\[
\begin{aligned}
\|P_\tau - P_{\tau-1}\|_1
&=\; \sum_{y_{<\tau}}\mu(y_{<\tau})\sum_{y_\tau}\big|\mu(y_\tau\mid y_{<\tau}) - \nu(y_\tau\mid y_{<\tau})\big|
\\
&=\; \,\E_{y_{<\tau}\sim\mu}\!\Big[\|\mu(\cdot\mid y_{<\tau}) - \nu(\cdot\mid y_{<\tau})\|_{1}\Big].
\end{aligned}
\]
Applying the triangle inequality to $\|\mu - \nu\|_1$,
\[
\|\mu - \nu\|_1
\;\leq\; \sum_{\tau=1}^{t-1}\|P_\tau - P_{\tau-1}\|_1
\;=\; \sum_{\tau<t}\E_{y_{<\tau}\sim\mu}\!\Big[\|\mu(\cdot\mid y_{<\tau}) - \nu(\cdot\mid y_{<\tau})\|_{1}\Big],
\]
which is~\eqref{eq:autoregr-tv}.

\textbf{Step B: application to $\pi_u$ and $\pi_{\text{ref}}$.}
Apply~\eqref{eq:autoregr-tv} with $\mu = \pi_u(\cdot\mid x)$ and $\nu = \pi_{\text{ref}}(\cdot\mid x)$. At each position $\tau < t$, since steering is applied only at the pre-logit stage, $\pi_u(\cdot\mid x, y_{<\tau})$ and $\pi_{\text{ref}}(\cdot\mid x, y_{<\tau})$ share the same hidden state $h_\tau = h_\tau(x, y_{<\tau})$, and the per-step logit perturbation is $\delta_\tau = Wu_\tau$. By Lemma~\ref{lem:softmax-lip}, we have
\[
\|\pi_u(\cdot\mid x, y_{<\tau}) - \pi_{\text{ref}}(\cdot\mid x, y_{<\tau})\|_{1}
\;\leq\; \sqrt{V}\,\|Wu_\tau\|_2
\;\leq\; \sqrt{V}\,\sigma_{\max}(W)\,\|u_\tau\|_2.
\]
This bound is uniform in $y_{<\tau}$, so taking the expectation under $\pi_u$ does not enlarge it. Substituting into~\eqref{eq:autoregr-tv},
\[
\big\|\pi_u(y_{<t}\mid x) - \pi_{\text{ref}}(y_{<t}\mid x)\big\|_{1}
\;\leq\; \sum_{\tau<t}\sqrt{V}\,\sigma_{\max}(W)\,\|u_\tau\|_2
\;=\;\sqrt{V}\,\sigma_{\max}(W) \sum_{\tau<t}\|u_\tau\|_2,
\]
which is the claim with $L_{\mathrm{traj}} = \sqrt{V}\,\sigma_{\max}(W)$.
\end{proof}

Combining~\eqref{eq:TV-bound} with Lemma~\ref{lem:seq-tv}:
\begin{equation}
\label{eq:F-traj-bound}
\|\bar{F}_t^u - \bar{F}_t\|_2 \;\leq\; \sigma_{\max}(W)^2\cdot L_{\mathrm{traj}}\sum_{\tau<t}\|u_\tau\|_2 \;=:\; C_2\sum_{\tau<t}\|u_\tau\|_2,
\end{equation}
with $C_2 = \sqrt{V}\,\sigma_{\max}(W)^3$.
 
\textbf{Step 3: bounding $\|\tilde{F}_t^u - \bar{F}_t^u\|_2$.}
By definition,
\[
\tilde{F}_t^u - \bar{F}_t^u \;=\; \E_{y_{<t}\sim\pi_u}\!\big[F_t(u_t) - F_t(0)\big],
\]
and Step~1 gives $\|F_t(u_t) - F_t(0)\|_2 \leq C_1\|u_t\|_2$ pointwise in $y_{<t}$. 
Using Jensen's inequality ($\|\E[X]\|_2 \leq \E\|X\|_2$) we get
\begin{equation}
\label{eq:F-star-tilde}
\|\tilde{F}_t^u - \bar{F}_t^u\|_2 \;\leq\; C_1\,\|u_t\|_2.
\end{equation}
 
\textbf{Step 4: gradient equivalence.}
Considering the norm of the gradient differences of the regularizers, using 
~\eqref{eq:F-traj-bound} and~\eqref{eq:F-star-tilde}, we have:
\begin{align}
\|\bar{F}_t^uu_t - \bar{F}_t u_t\|_2 \;&\leq\; C_2\,\|u_t\|_2\sum_{\tau<t}\|u_\tau\|_2 
\;=\; O(\|u\|^2)\label{eq:contextApproximationEquivalence}
,\\
\|\tilde{F}_t^u u_t - \bar{F}_t^uu_t\|_2 \;&\leq\; C_1\,\|u_t\|_2^2 
\;=\; O(\|u\|^2)\label{eq:taylorApproximationEquivalence}
.
\end{align}
By the triangle inequality, $\|\tilde{F}_t^u u_t - \bar{F}_t u_t\|_2 = O(\|u\|^2)$ as well. 

\end{proof}

\subsection{Proof of Proposition \ref{prop:suffix-second-order}}
\label{app:proof-suffix-second-order}

In this proof, $\mathcal K(u)=K(u)$, and $u$ is identified with its vectorization in $\R^{Td}$.

\begin{proof}

\textbf{Step 1: the sequence score is block separable and collapses to position $t$.}
Applying~\eqref{eq:score-collapse} with $\tau = T+1$ (so that $y_{<\tau} = y$) gives $\nabla_{u_t}\log\pi_u(y\mid x) = \nabla_{u_t}\log\pi_u(y_t\mid x, y_{<t})$: only the position-$t$ factor of the autoregressive product depends on $u_t$, because steering enters at the pre-logit stage and the hidden states $h_{\tau}(x,y_{<\tau})$ are $u$-independent. Differentiating the explicit per-token form
$\log\pi_u(y_t\mid x, y_{<t}) = \big(W(h_t+u_t)+b\big)_{y_t} - A\big(W(h_t+u_t)+b\big)$
and using $\nabla A(z_t+\delta_t) = \pi_u^t$ (Lemma~\ref{lem:logpart}),
\begin{equation}
\label{eq:seq-score}
\nabla_{u_t}\log\pi_u(y\mid x) \;=\; W^\top\big(e_{y_t} - \pi_u^t\big) \;\in\; \R^{d},
\end{equation}
where $e_{y_t}\in\R^{V}$ is the one-hot vector at $y_t$. Stacking the $T$ blocks~\eqref{eq:seq-score}, define for each sequence $y\in[V]^T$ the \emph{stacked score}
\begin{equation}
\label{eq:stacked-score}
\begin{aligned}
S(y;u) &:=\; \nabla_u\log\pi_u(y\mid x)
\;=\; \big(S_1(y;u),\dots,S_T(y;u)\big) \;\in\; \R^{Td},
\\
S_t(y;u) &:= W^\top\big(e_{y_t}-\pi_u^t\big),
\end{aligned}
\end{equation}
and abbreviate its value at the origin by $S(y) := S(y;0)$, with blocks $S_t(y) = W^\top(e_{y_t}-\pi_{\text{ref}}^t)$. We suppress the argument $y$ when no confusion arises and write $S$, $S_t$.

\textbf{Step 2: }$\nabla\mathcal{K}(0) = 0$ \textbf{and} $\nabla^2\mathcal{K}(0) = \E_{\pi_{\text{ref}}}[S S^\top]$.
Since $\mathcal{K}\ge 0$ with $\mathcal{K}(0) = 0$, the origin is a global minimizer of $\mathcal{K}$, so $\nabla\mathcal{K}(0) = 0$.

For the Hessian, write $\mathcal{K}(u) = \sum_{y\in[V]^T}\pi_u(y\mid x)\,\ell(y;u)$ with $\ell(y;u) := \log\tfrac{\pi_u(y\mid x)}{\pi_{\text{ref}}(y\mid x)}$; the sum has $V^T$ terms, each smooth in $u$, so we may differentiate termwise. Two elementary facts drive the computation. First, since only the numerator of $\ell$ depends on $u$, definition~\eqref{eq:stacked-score} gives, for every fixed $y$,
\begin{equation}
\label{eq:ell-grad}
\nabla_u\ell(y;u) \;=\; \nabla_u\log\pi_u(y\mid x) \;=\; S(y;u).
\end{equation}
Second, the log-derivative identity applied to the scalar $\pi_u(y\mid x)$ at fixed $y$ reads
\begin{equation}
\label{eq:log-deriv-seq}
\nabla_u\pi_u(y\mid x) \;=\; \pi_u(y\mid x)\,S(y;u) \;\in\;\R^{Td},
\quad\text{so}\quad
\nabla_u\pi_u(y\mid x)\big|_{u=0} \;=\; \pi_{\text{ref}}(y\mid x)\,S(y).
\end{equation}
Differentiating $\mathcal{K}$ once by the product rule and using~\eqref{eq:ell-grad}--\eqref{eq:log-deriv-seq},
\[
\nabla\mathcal{K}(u)
\;=\; \sum_{y}\big[\nabla_u\pi_u(y\mid x)\big]\,\ell(y;u)
\;+\; \sum_{y}\pi_u(y\mid x)\,S(y;u),
\]
and the second sum vanishes identically in $u$: by~\eqref{eq:log-deriv-seq} it equals $\sum_y\nabla_u\pi_u(y\mid x) = \nabla_u\big(\sum_y\pi_u(y\mid x)\big) = \nabla_u 1 = 0$. Differentiating the surviving sum once more,
\[
\nabla^2\mathcal{K}(u)
\;=\; \underbrace{\sum_{y}\big[\nabla_u^2\pi_u(y\mid x)\big]\,\ell(y;u)}_{(\mathrm{I})}
\;+\; \underbrace{\sum_{y}\big[\nabla_u\pi_u(y\mid x)\big]\big[\nabla_u\ell(y;u)\big]^\top}_{(\mathrm{II})},
\]
where each summand of $(\mathrm{I})$ is a $Td\times Td$ matrix and each summand of $(\mathrm{II})$ is the outer product of two vectors in $\R^{Td}$. Evaluate at $u = 0$. Every scalar coefficient in $(\mathrm{I})$ satisfies $\ell(y;0) = \log 1 = 0$, and the matrices $\nabla_u^2\pi_u(y\mid x)\big|_{u=0}$ are finite, so $(\mathrm{I})\big|_{u=0} = 0$. In $(\mathrm{II})$, substituting~\eqref{eq:log-deriv-seq} for the left factor and~\eqref{eq:ell-grad} for the right factor,
\begin{equation}
\label{eq:hess-is-fisher}
\nabla^2\mathcal{K}(0)
\;=\; \sum_{y}\pi_{\text{ref}}(y\mid x)\,S(y)S(y)^\top
\;=\; \E_{y\sim\pi_{\text{ref}}(\cdot\mid x)}\!\big[S S^\top\big],
\end{equation}
whose $(t,\tau)$ block is $\E_{\pi_{\text{ref}}}[S_t S_\tau^\top] \in \R^{d\times d}$.

\textbf{Step 3: the off-diagonal blocks of~\eqref{eq:hess-is-fisher} vanish.}
Fix $t < \tau$. The block $S_t$ is a function of $(x, y_{\le t})$ alone, and $y_{\le t} \subseteq y_{<\tau}$, so $S_t$ is measurable with respect to $y_{<\tau}$. Since $\E_{\pi_{\text{ref}}}[e_{y_\tau}\mid y_{<\tau}] = \pi_{\text{ref}}^\tau$, we have
\[
\E_{\pi_{\text{ref}}}\!\big[S_\tau \mid y_{<\tau}\big] \;=\; W^\top\big(\pi_{\text{ref}}^\tau - \pi_{\text{ref}}^\tau\big) \;=\; 0 .
\]
The tower property then gives $\E_{\pi_{\text{ref}}}[S_t S_\tau^\top] = \E_{\pi_{\text{ref}}}\big[S_t\,\E_{\pi_{\text{ref}}}[S_\tau\mid y_{<\tau}]^\top\big] = 0$, and symmetrically for $t > \tau$.

\textbf{Step 4: the diagonal blocks are the frozen Fishers $\bar{F}_\tau$.}
Conditioned on $y_{<\tau}$, the vector $e_{y_\tau}$ has mean $\pi_{\text{ref}}^\tau$ and covariance $C_{\pi_{\text{ref}}^\tau}$, so
\[
\E_{\pi_{\text{ref}}}\!\big[S_\tau S_\tau^\top \mid y_{<\tau}\big]
\;=\; W^\top\,C_{\pi_{\text{ref}}^\tau}\,W
\;=\; F_\tau(0),
\]
by the definition~\eqref{eq:fisher-u} of $F_\tau(\cdot)$. Taking the outer expectation over $y_{<\tau}\sim\pi_{\text{ref}}$ and invoking~\eqref{eq:fisher-effort},
\[
\E_{\pi_{\text{ref}}}\!\big[S_\tau S_\tau^\top\big] \;=\; \E_{y_{<\tau}\sim\pi_{\text{ref}}}\!\big[F_\tau(0)\big] \;=\; \bar{F}_\tau ,
\]
which is exactly the initial Fisher regularizer. Combining with Step~3,
\begin{equation}
\label{eq:hess-blkdiag}
\nabla^2\mathcal{K}(0) \;=\; \operatorname{blkdiag}\big(\bar{F}_1,\dots,\bar{F}_T\big).
\end{equation}

\textbf{Step 5: Taylor expansion and subtraction.}
Since $\nabla\mathcal{K}(0) = 0$ and $\nabla^2\mathcal{K}$ is Lipschitz near the origin, Taylor's theorem with the associated remainder bound and~\eqref{eq:hess-blkdiag} yield, blockwise,
\begin{equation}
\label{eq:full-grad-taylor}
\nabla_{u_t}\mathcal{K}(u) \;=\; \bar{F}_t\,u_t \;+\; O(\|u\|^2),
\qquad t = 1,\dots,T .
\end{equation}
Crucially, \eqref{eq:full-grad-taylor} expands the \emph{complete} gradient~\eqref{eq:seqkl-grad}, suffix term included. 
Theorem~\ref{thm:surrogate-equiv} establishes the same first-order expansion for the analytic part alone,
$g_t^{\mathrm{an}}(u) = \bar{F}_t u_t + O(\|u\|^2)$.
Subtracting the two expansions and using the decomposition $\nabla_{u_t}\mathcal{K}(u) = g_t^{\mathrm{an}}(u) + g_t^{\mathrm{suffix}}(u)$ of Theorem~\ref{thm:seqkl-grad} delivers $g_t^\text{suffix}(u) = O(\|u\|^2)$. Chaining~\eqref{eq:full-grad-taylor} with~\eqref{eq:contextApproximationEquivalence} and \eqref{eq:taylorApproximationEquivalence} gives~\eqref{eq:surrogate-ladder}.
\end{proof}

\newpage

\section{Further Experimental Results}

\begin{figure}[ht!]
    \centering
    \includegraphics[width=\linewidth]{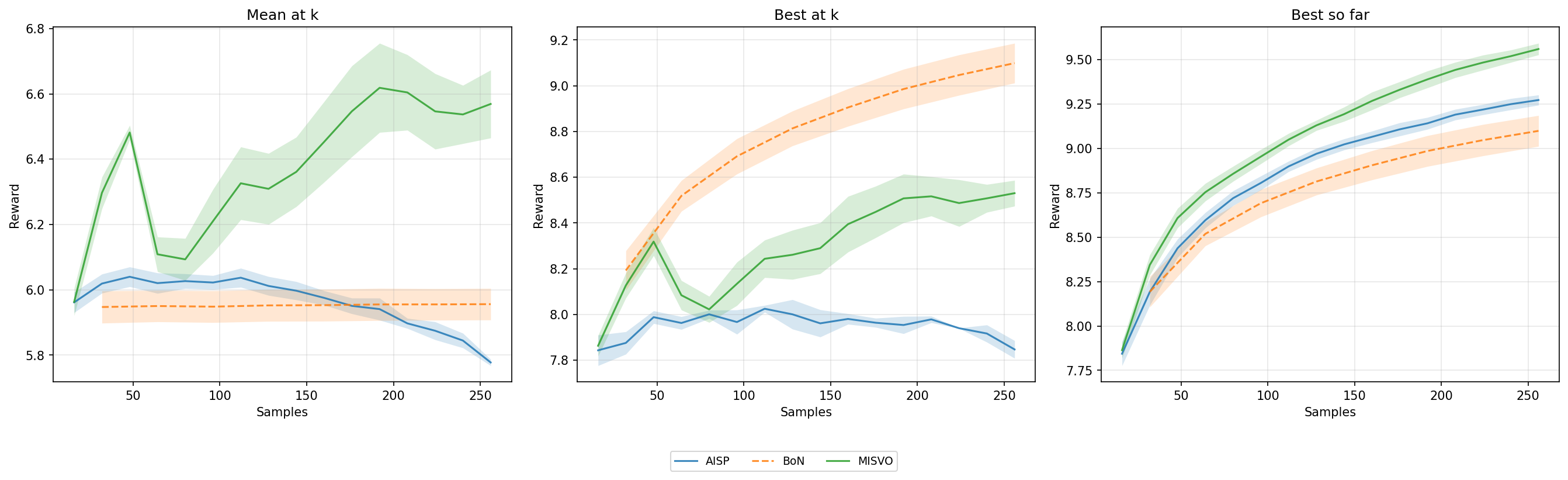}
    \caption{Reward trajectory for Gemma 3 4B/SHP as a function of the cumulative number of sampled completions. \textbf{Left:} mean reward within each step. \textbf{Middle:} maximum reward within each step. \textbf{Right:} cumulative maximum reward.}
    \label{fig:gemmaTrajectoryShp}
\end{figure}

\begin{figure}[ht!]
    \centering
    \includegraphics[width=\linewidth]{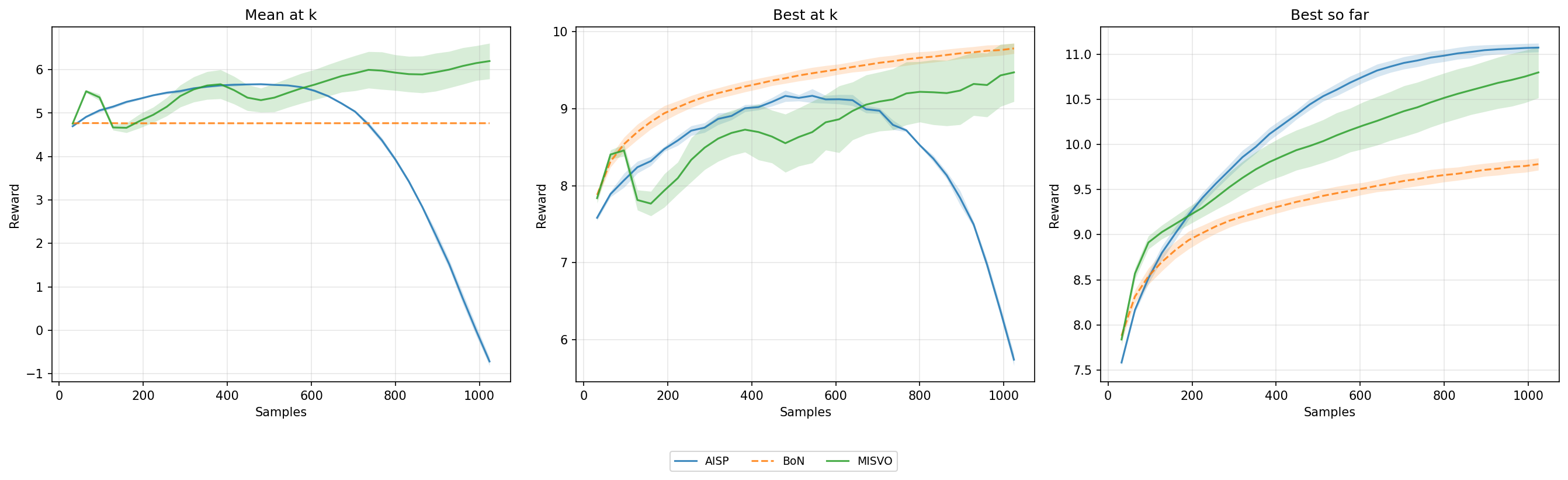}
    \caption{Reward trajectory for Llama 3 8B Instruct/SHP as a function of the cumulative number of sampled completions. \textbf{Left:} mean reward within each step. \textbf{Middle:} maximum reward within each step. \textbf{Right:} cumulative maximum reward.}
    \label{fig:llamaTrajectoryShp}
\end{figure}

\begin{figure}[ht!]
    \centering
    \includegraphics[width=\linewidth]{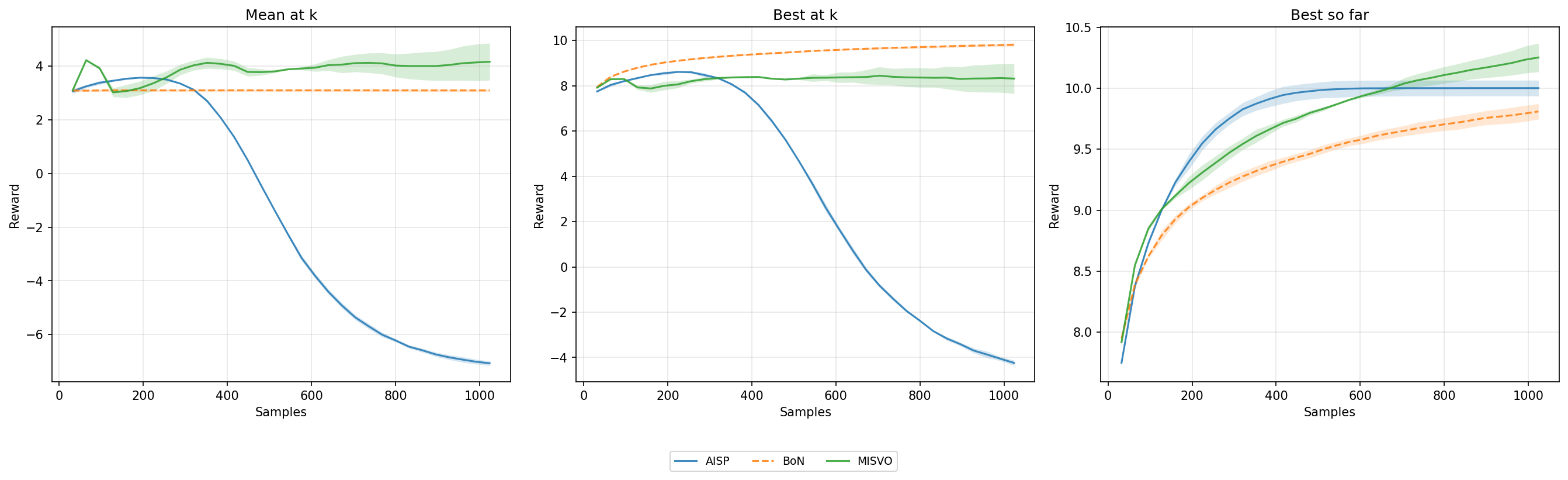}
    \caption{Reward trajectory for Phi 4/SHP as a function of the cumulative number of sampled completions. \textbf{Left:} mean reward within each step. \textbf{Middle:} maximum reward within each step. \textbf{Right:} cumulative maximum reward.}
    \label{fig:phiTrajectoryShp}
\end{figure}

\subsection{Ablation: Fisher surrogates}
\label{sec:exp-ablation}

We compare the Fisher surrogates $\{\bar F_t,\bar F_t^u,\tilde F_t^u\}$ and a KL-regularized variant on LFM2.5-1.2B with SHP, holding the remaining hyperparameters at their main-experiment values.

\input{tables/regularizer_ablation}

Table~\ref{tab:reg-ablation} reports reward differences of at most $0.34$ and diversity/coherence differences of at most $0.013$ among the Fisher variants. The frozen-reference surrogate has the highest reported reward in this setting. The variants' similar performance is compatible with their local first-order equivalence, but does not establish that the experiments remain in the asymptotic regime. The KL-regularized variant has lower reported reward.

\subsection{Runtime and memory}
\label{app:runtime}
We profile each method on two models using one NVIDIA H200 NVL. After loading the language and reward models, we process five warmup prompts followed by ten timed prompts, sequentially. CUDA events bracket each per-prompt invocation, including sampling, reward scoring, optimization, and response selection. Table~\ref{tab:profile} reports per-prompt latency, generation throughput, and peak GPU memory on SHP with a 512-token generation cap. BoN is fastest on both models; AISP and MISVO interleave generation with scoring and updates.

MISVO retains reference hidden states from the initial rollouts and uses vocabulary-sized intermediates for Fisher--vector products. These contribute to its higher peak memory. Chunked products may reduce working memory, but we do not evaluate them here.

The measurements use single-prompt batches: $K$ rollouts per iteration for AISP/MISVO and $KN$ candidates for BoN. Cross-prompt batching may change throughput and is outside this profiling protocol.

\input{tables/profile}

\section{Relation to per-prompt adapter fitting}
Per-prompt low-rank adaptation (LoRA) offers another form of test-time reward optimization. Its cost depends on adapter rank, placement, and the gradients required for the adapted modules. Pre-logit steering confines optimization to additive interventions at the frozen LM head and retains a shared set of model weights.

Our experiments compare candidate selection and pre-logit steering at matched rollout counts. This isolates performance at a common sampling budget but does not equalize wall-clock cost or memory. A comparison with per-prompt LoRA under both rollout and runtime budgets would broaden the evaluation. We leave this comparison to future work and make no claim that steering is preferable to adapter fitting.

\end{document}

%% file: tables/shp_skywork.tex
\begin{table}[t]
\centering
\caption{Results on SHP across model scales. We sample $500$ prompts and score with Skywork-Reward-V2-Qwen3-0.6B. Reward (best-so-far over the optimization trace) is the primary metric, reported as mean $\pm$ std over $3$ seeds ($42, 1340, 2026$); \textbf{bold} indicates the best per model. \textsc{MISVO} uses the established  Fisher-quadratic in \eqref{eq:fisher-effort} $\Bar{F}_t$. All methods see a matched per-prompt rollout budget of $K \cdot N$ samples.}
\scalebox{0.9}{
\begin{tabular}{llccc}
\toprule
Model & Method & Reward $\uparrow$ & Diversity $\uparrow$ & Coherence $\uparrow$ \\
\midrule
\multirow{3}{*}{LFM2.5 1.2B}
  & BoN (top-$p$) & $8.70 \pm 0.11$ & $0.688 \pm 0.004$ & $0.671 \pm 0.004$ \\
  & AISP          & $9.09 \pm 0.03$ & $0.726 \pm 0.015$ & $0.663 \pm 0.009$ \\
  & MISVO          & $\mathbf{9.29 \pm 0.04}$ & $0.719 \pm 0.002$ & $0.654 \pm 0.005$ \\
\midrule
\multirow{3}{*}{Gemma3 4B}
  & BoN (top-$p$) & $9.10 \pm 0.11$ & $0.721 \pm 0.002$ & $0.632 \pm 0.006$ \\
  & AISP          & $9.27 \pm 0.04$ & $0.741 \pm 0.005$ & $0.631 \pm 0.005$ \\
  & MISVO          & $\mathbf{9.56 \pm 0.04}$ & $0.728 \pm 0.005$ & $0.632 \pm 0.003$ \\
\midrule
\multirow{3}{*}{Llama3 8B Instruct}
  & BoN (top-$p$) & $9.78 \pm 0.08$ & $0.700 \pm 0.002$ & $0.671 \pm 0.003$ \\
  & AISP          & $\mathbf{11.07 \pm 0.06}$ & $0.757 \pm 0.002$ & $0.665 \pm 0.001$ \\
  & MISVO          & $10.80 \pm 0.35$ & $0.716 \pm 0.010$ & $0.672 \pm 0.003$ \\
\midrule
\multirow{3}{*}{Phi 4 (14B)}
  & BoN (top-$p$) & $9.81 \pm 0.08$ & $0.718 \pm 0.004$ & $0.636 \pm 0.005$ \\
  & AISP          & $10.00 \pm 0.08$ & $0.769 \pm 0.002$ & $0.634 \pm 0.005$ \\
  & MISVO          & $\mathbf{10.25 \pm 0.14}$ & $0.735 \pm 0.008$ & $0.636 \pm 0.001$ \\
\bottomrule
\end{tabular}
}
\label{tab:shp}
\end{table}

%% file: tables/mbpp.tex
\begin{table}[t]
\centering
\caption{Results on MBPP+ across model scales. Metric is the pass rate (fraction of unit tests passed), reported as mean $\pm$ std over $3$ seeds ($42, 1340, 2026$); \textbf{bold} indicates the best per model. AISP is shown at two guidance strengths $\lambda \in \{0.1, 1.0\}$. \textsc{MISVO} uses the established  Fisher-quadratic in \eqref{eq:fisher-effort} $\Bar{F}_t$.}
\scalebox{0.9}{
\begin{tabular}{llccc}
\toprule
Model & Method & Pass rate $\uparrow$ & Diversity $\uparrow$ & Coherence $\uparrow$ \\
\midrule
\multirow{4}{*}{LFM2.5 1.2B}
  & BoN (top-$p$)          & $0.703 \pm 0.013$ & $0.613 \pm 0.018$ & $0.605 \pm 0.010$ \\
  & AISP ($\lambda{=}0.1$) & $0.675 \pm 0.014$ & $0.603 \pm 0.023$ & $0.677 \pm 0.006$ \\
  & AISP ($\lambda{=}1$)   & $0.669 \pm 0.017$ & $0.601 \pm 0.015$ & $0.679 \pm 0.004$ \\
  & MISVO                   & $\mathbf{0.739 \pm 0.027}$ & $0.588 \pm 0.008$ & $0.680 \pm 0.007$ \\
\midrule
\multirow{4}{*}{Gemma3 4B}
  & BoN (top-$p$)          & $0.739 \pm 0.010$ & $0.596 \pm 0.031$ & $0.594 \pm 0.005$ \\
  & AISP ($\lambda{=}0.1$) & $0.742 \pm 0.008$ & $0.610 \pm 0.016$ & $0.593 \pm 0.004$ \\
  & AISP ($\lambda{=}1$)   & $0.733 \pm 0.008$ & $0.616 \pm 0.013$ & $0.591 \pm 0.004$ \\
  & MISVO                   & $\mathbf{0.761 \pm 0.021}$ & $0.596 \pm 0.002$ & $0.586 \pm 0.009$ \\
\midrule
\multirow{4}{*}{Llama3 8B Instruct}
  & BoN (top-$p$)          & $\mathbf{0.733 \pm 0.014}$ & $0.584 \pm 0.015$ & $0.598 \pm 0.010$ \\
  & AISP ($\lambda{=}0.1$) & $0.708 \pm 0.025$ & $0.637 \pm 0.014$ & $0.598 \pm 0.005$ \\
  & AISP ($\lambda{=}1$)   & $0.700 \pm 0.008$ & $0.645 \pm 0.022$ & $0.597 \pm 0.002$ \\
  & MISVO                   & $\mathbf{0.733 \pm 0.014}$ & $0.568 \pm 0.033$ & $0.601 \pm 0.013$ \\
\bottomrule
\end{tabular}
}
\label{tab:mbpp}
\end{table}

%% file: tables/policyDeviation.tex
\begin{table}[htb]
\centering
\caption{Measured policy deviation on SHP with LFM2.5-1.2B. KL is
$\widehat{KL}_\text{ref}(u)$ from~\eqref{eq:kl-estimator};
$\pm$ denotes variation across prompts.}
\label{tab:measured-kl}
\begin{tabular}{lccc}
\toprule
Method / regularizer & Reward $\uparrow$ & Final $\|u\|$ & Final KL $\downarrow$ \\
\midrule
MISVO   & $\mathbf{9.60} \pm 3.38$ & $339.9 \pm 22.4$  & $47.5 \pm 12.9$  \\
AISP                   & $9.28 \pm 3.10$          & $2631.7 \pm 86.3$ & $303.9 \pm 61.6$ \\
Unregularized steering & $9.31 \pm 3.30$          & $474.6 \pm 24.9$  & $162.2 \pm 38.3$ \\
\midrule
Direct KL regularizer  & $8.49 \pm 3.07$          & $459.3 \pm 29.1$  & $120.1 \pm 29.7$  \\
Hidden-space $\ell_2$  & $8.39 \pm 2.86$          & $48.3 \pm 20.4$   & $0.92 \pm 1.1$     \\
\midrule
BoN ($N\!=\!256$)      & $8.89 \pm 2.69$          & n/a               & n/a               \\
\bottomrule
\end{tabular}
\end{table}

%% file: tables/regularizer_ablation.tex
\begin{table}[t]
\centering
\caption{Regularizer ablation on SHP with LFM2.5-1.2B. Settings: $K = N = 16$, $\eta = 0.1$, $\lambda = 1.0$, seed 42, $500$ prompts. 
The three Fisher surrogates lie within close values of each other (Theorem~\ref{thm:surrogate-equiv}).
}
\label{tab:reg-ablation}
\scalebox{0.9}{
\begin{tabular}{lccc}
\toprule
Regularizer & Reward $\uparrow$ & Diversity $\uparrow$ & Coherence $\uparrow$ \\
\midrule
off-policy rollouts, off-policy Fisher $\bar{F}_t$                     & $\mathbf{9.26}$ & $0.723$ & $0.661$ \\
on-policy rollouts, off-policy Fisher $\bar{F}_t^u$    & $9.06$          & $0.721$ & $0.659$ \\
on-policy rollouts, on-policy Fisher $\tilde{F}_t^u$          & $8.92$          & $0.710$ & $0.656$ \\
KL                                            & $8.35$          & $0.725$ & $0.660$ \\
\bottomrule
\end{tabular}
}
\end{table}

%% file: tables/profile.tex
\begin{table}[t]
\centering
\small
\caption{%
Per-prompt runtime, generation throughput, and peak GPU memory for each
test-time alignment method on the SHP prompt set. Numbers are means over
10 timed prompts (after 5 warmup) on a single NVIDIA H200 NVL,
\texttt{max\_new\_tokens=512}, mean prompt length 131 tokens, no
cross-prompt batching. AISP uses $K{=}16$ Gaussian perturbations per
iteration over $N{=}16$ iterations; MISVO uses $K{=}16$
trajectories over $N{=}16$ gradient steps with frozen-Fisher
regularization; BoN draws $N{=}256$ top-$p$ samples.}
\label{tab:profile}
\begin{tabular}{lc rrr rrr}
\toprule
& & \multicolumn{3}{c}{LFM2.5-1.2B} & \multicolumn{3}{c}{Gemma3-4B-IT} \\
\cmidrule(lr){3-5} \cmidrule(lr){6-8}
Method & Hyperparams & Time (s) & Tok/s & Peak (GB) & Time (s) & Tok/s & Peak (GB) \\
\midrule
BoN   & $N{=}256$               & \textbf{24.4} & \textbf{5385} & 3.7  & \textbf{79.1}  & \textbf{1658} & 12.2 \\
AISP  & $K{=}16,\, N{=}16$ & 43.3          & 3053          & \textbf{3.5}          & 195.7          & 670           & \textbf{10.5} \\
MISVO & $K{=}16,\, N{=}16$      & 49.3          & 2666          & 10.7          & 167.1          & 784           & 38.8 \\
\bottomrule
\end{tabular}

\end{table}